\newtheorem{theorem}{Theorem}
\newtheorem{lemma}{Lemma}
\newtheorem{proposition}{Proposition}
\newtheorem{definition}{Definition}
\newtheorem{assumption}{Assumption}
\newtheorem{claim}{Claim}
\theoremstyle{remark}
\newtheorem{remark}{Remark}
\renewcommand{\arraystretch}{0.9}
\title{Unbalanced CO-Optimal Transport}
\author{
    Quang Huy Tran \textsuperscript{\rm 1,\rm 2}, Hicham Janati \textsuperscript{\rm 3}, Nicolas Courty \textsuperscript{\rm 1}, Rémi Flamary \textsuperscript{\rm 2}, \\ Ievgen Redko \textsuperscript{\rm 4}, Pinar Demetci \textsuperscript{\rm 5, \rm 6}, Ritambhara Singh \textsuperscript{\rm 5, \rm 6}
    }
\begin{document}

\maketitle

\begin{abstract}
	Optimal transport (OT) compares probability distributions by computing a meaningful alignment between their samples. CO-optimal transport (COOT) takes this comparison further by inferring an alignment between features as well. While this approach leads to better alignments and generalizes both OT and Gromov-Wasserstein distances, we provide a theoretical result showing that it is sensitive to outliers that are omnipresent in real-world data. This prompts us to propose unbalanced COOT for which we provably show its robustness to noise in the compared datasets. To the best of our knowledge, this is the first such result for OT methods in incomparable spaces. With this result in hand, we provide empirical evidence of this robustness for the challenging tasks of heterogeneous domain adaptation with and without varying proportions of classes and simultaneous alignment of samples and features across single-cell measurements.
\end{abstract}

\section{Introduction}

The last decade has witnessed many successful applications of optimal
transport (OT) \cite{Monge81,Kanto42} in machine learning, namely in domain adaptation \cite{Courty16}, generative adversarial networks \cite{Arjovsky17}, 
classification \cite{Frogner15}, dictionary learning \cite{Rolet16}, 
semi-supervised learning \citep{Solomon14}. When the
supports of the probability measures lie in the same ground metric space, it is natural to use the distance defined by the metric
to induce the cost, which leads to the famous Wasserstein distance \citep{Villani03}. When they do not, one can rely on the idea of Gromov-Hausdorff distance \citep{Gromov81} and its equivalent reformulations \citep{Gromov99,Kalton99,Burago01}, and adapt them to the setting of metric measure spaces \citep{Gromov99}. This results in, for example, the Gromov-Wasserstein
(GW) distance \citep{Memoli07,Memoli11,Sturm12}, which has been widely used in many applications, namely in shape matching \citep{Memoli11}, 
comparing kernel matrices \citep{peyre16}, graphs \citep{Vayer19b,Xu19,Xu19b}, 
computational biology \citep{Demetci22}, heterogeneous domain adaptation \citep{Yan18}, 
correspondence alignment \citep{Solomon16}, machine translation \citep{Melis18}.

By construction, the GW distance can only provide the sample alignment that best preserves the intrinsic geometry of the distributions and, as such, compares square pairwise relationship matrices. The CO-Optimal transport (COOT) \citep{Redko20,Chowdhury21b} goes beyond these limits by simultaneously learning two independent (feature and sample) correspondences, and thus provides greater flexibility over the GW distance in terms of usage and interpretability. First, it allows us to measure similarity between arbitrary-size matrices. An interesting use case is, for instance, on tabular data, which are usually expressed as a matrix whose rows represent samples and columns represent features. For the GW distance, 
the similarity or distance matrix (or any square matrix derived from the data)
must be calculated in advance and the effect of the individual variables is lost during this computation. On the other hand, COOT can bypass this step as it can use either the tabular data directly or the similarity matrices as inputs. Second, COOT provides both sample and feature correspondences. These feature correspondences are also interpretable and allow to recover relations between the features of two different datasets even when they do not lie in the same space.

Similar to classical OT, COOT enforces hard constraints on the marginal distributions both between samples and features. These constraints lead to two main limitations: (1) imbalanced datasets where samples or features are re-weighted cannot be accurately compared; (2) mass transportation \emph{must} be exhaustive: outliers, if any, must be matched regardless of the cost they induce. To circumvent these limitations, we propose to relax the mass preservation constraints in the COOT distance and study a broadly applicable and general OT framework that includes several well-studied cases presented in Table~\ref{t:comparisons}.

\paragraph{Related work.}
To relax the OT marginal constraints, a straightforward solution is to control the difference between the marginal distributions of the transportation plan and the data by some discrepancy measure, e.g., Kullback-Leibler divergence. In classical OT, this gives rise to the unbalanced OT (UOT), which was first proposed by~\citep{Benamou03}. The theoretical and numerical aspects of this extension have been studied extensively \citep{Liero18,Chizat18b,Chizat18a,Khiem20} and are gaining increasing attention in the machine 
learning community, with wide-range applications, namely in domain adaptation \citep{Fatras21}, generative adversarial networks 
\citep{Balaji20, Yang19}, dynamic tracking \citep{Lee20}, crowd counting \citep{Ma21}, neuroscience \cite{janati2019group, bazeille2019} or modeling cell developmental trajectories \citep{Schiebinger19}. 
Unbalanced OT and its variants are usually sought for their known robustness to outliers \citep{Mukherjee21,Balaji20,Fatras21}. This appealing property goes beyond classical OT. For instance, to compare signed and non-negative measures in incomparable spaces, unbalanced OT \citep{Liero18} can be blended with the 
$L_p$-transportation distance \citep{Sturm06}, which leads to the Sturm-Entropic-Transport distance \citep{Ponti20}, or with the GW distance, which gives rise to the unbalanced GW (UGW) distance \citep{Sejourne20}. Also motivated by the unbalanced OT, \citep{Zhang21} proposed a relaxation of the 
bidirectional Gromov-Monge distance called unbalanced bidirectional Gromov-Monge divergence.

\begin{table*}[t]
	\centering
		\begin{tabular}{*7l}    
            \toprule
		    & OT & UOT & GW & UGW & COOT & UCOOT \\
            \midrule
            Across spaces & \no & \no & \yes & \yes & \yes & \yes  \\
            Sample alignment & \yes & \yes & \yes & \yes & \yes & \yes \\
            Feature alignment & \no & \no & \no & \no  & \yes & \yes\\
            Robustness to outliers & \no \citep{Fatras21} & \yes \citep{Fatras21} & \no(Prop. \ref{prop:coot-not-robust}) & \yes (Thm. \ref{thm:ucoot_robust}) & \no (Prop. \ref{prop:coot-not-robust})& \yes (Thm. \ref{thm:ucoot_robust})\\
		\bottomrule 
        \hline
		\end{tabular}
		\caption{Properties of different OT formulations generalized by UCOOT. The proposed UCOOT is not only able to learn informative feature alignments, but also robust to outliers. 
    \label{t:comparisons}}
\end{table*}

\paragraph{Contributions.} In this work, we introduce an unbalanced extension of COOT called ``Unbalanced CO-Optimal transport'' (UCOOT). UCOOT -- defined for both discrete and continuous data -- is a general framework that encompasses all the OT variants displayed in Table \ref{t:comparisons}. Our main contribution is to show that UCOOT is provably robust to both samples and features outliers, while its balanced counterpart can be made arbitrarily large with strong enough perturbations. To the best of our knowledge, this is the first time such a general robustness result is established for OT across different spaces. Our theoretical findings are showcased in unsupervised heterogeneous domain adaptation and single-cell multi-omic data alignment, demonstrating a very competitive performance.

\paragraph{Notations.} For any integer $n \geq 1$, we write $[|1, n|] := \{1,...,n\}$. Given a Polish space $\mathcal X$, we denote $\mathcal M^+(\mathcal X)$ the set of nonnegative and finite Borel 
measures over $\mathcal X$. For any $\mu \in \mathcal M^+(\mathcal X)$, we denote 
its mass by $m(\mu) := \mu(\mathcal X)$.
Unless specified otherwise, we always consider fully supported measures, 
i.e. $\text{supp}(\mu) = \mathcal X$, for any measure $\mu \in \mathcal M^+(\mathcal X)$. 
The product measure of two measures $\mu$ and $\nu$ is defined as: 
$d (\mu \otimes \nu)(x,y) := d\mu(x) d\nu(y)$.
Given $\pi \in \mathcal{M}^{+}(\mathcal X \times \mathcal Y)$, we denote 
$(\pi_{\#1}, \pi_{\#2})$ its marginal distributions i.e. $d\pi_{\#1} = \int_{\mathcal Y} d\pi$ and
$d\pi_{\#2} = \int_{\mathcal X} d\pi$.
For $\mu, \nu \in \cM^+(\cX)$, the Kullback-Leibler divergence is defined by $\kl(\mu\vert\nu) \eqdef \int \frac{d \mu}{ d \nu} \log \frac{d \mu}{ d \nu} \mathrm d\nu - \int \mathrm d\mu + \int \mathrm d\nu$ if $\mu \ll \nu$ and set to $+\infty$ otherwise. Finally, the indicator divergence
$\iota_{=}(\mu \vert \nu)$ is equal to 0 if $\mu = \nu$ and $+\infty$ otherwise.


\section{Unbalanced CO-Optimal Transport (UCOOT)} \label{sec:ucoot}
The ultimate goal behind the CO-Optimal Transport (COOT) framework is the simultaneous alignment of samples \emph{and} features to allow for comparisons across spaces of different dimensions. In this section, we discuss OT formulations including OT, UOT, GW, UGW and COOT, then introduce the proposed UCOOT and show how the aforementioned distances fall into our framework.


\paragraph{From sample alignment to sample-feature alignment.}
Let $(\cX_1^s, \mu_1^s)$ and $(\cX_2^s, \mu_2^s)$ be a pair of compact measure spaces such that $\cX_1^s$ and $\cX_2^s$ belong to some common metric space $(\cE, d)$. Classical (unbalanced) optimal transport infers one alignment (or joint distribution) $\pi^s \in \cM^+(\cX_1^{s} \times \cX_2^s)$ with marginals $(\pi^{s}_{\#1}, \pi^{s}_{\#2})$ close to $(\mu_1^s, \mu_2^s)$ according to some appropriate divergence $D$ such that the cost $\int c(x_1, x_2) \mathrm d\pi^{(s)}(x_1, x_2) + D(\pi^{s}_{\#1}\Vert \mu_1^s) + D(\pi^{s}_{\#2}\Vert \mu_2^s)$ is minimal. For instance, in balanced (resp. unbalanced) OT, $D$ corresponds to the indicator divergence (resp. KL divergence or TV). To define a generalized OT beyond one single alignment, we must first introduce a new pair of measure spaces $(\cX_1^{f}, \mu_1^f)$ and $(\cX_2^{f}, \mu_2^f)$.
Intuitively,  the two transport plans that must be inferred: $\pi^s$ across \emph{samples} and $\pi^f$ across \emph{features}, must minimize a cost of the form 
$\iint c((x_1^s, x_1^f), (x_2^s, x_2^f))\mathrm d\pi^s(x_1^s, x_2^s)\mathrm d \pi^f(x_1^f, x_2^f)$ where $c((x_1^s, x_1^f), (x_2^s, x_2^f))$ is the \emph{joint} cost of aligning the sample-feature pairs $(x_1^s, x_1^f)$ and $(x_2^s, x_2^f)$. However, unlike OT, there is no underlying ambient metric space in which comparisons between these pairs are straightforward. Thus, we consider a simplified cost of the form: $c((x_1^s, x_1^f), (x_2^s, x_2^f)) = |\xi_1(x_1^s, x_1^f) - \xi_2(x_2^s, x_2^f)|^p$, for $p \geq 1$ and some scalar functions $\xi_1, \xi_2$ that define the sample-feature interactions. A similar definition was adopted by~\cite{Chowdhury21b} to extend COOT to the continuous setting in the context of hypergraphs. Formally, our general formulation takes pairs of \emph{sample-feature spaces} defined as follows.

\begin{definition}[Sample-feature space]
Let $(\mathcal X^s, \mu^s)$ and $(\mathcal X^f, \mu^f)$
be compact measure spaces, where $\mu^f \in \mathcal M^+(\mathcal X^f)$ and $\mu^s \in \mathcal M^+(\mathcal X^s)$. 
  Let $\xi$ be a scalar integrable function in $L^p(\mathcal X^s \times \mathcal X^f, \mu^s \otimes \mu^f)$. We call the triplet 
  $\mathbb X = ((\mathcal X^s, \mu^s), (\mathcal X^f, \mu^f), \xi)$ a sample-feature space and $\xi$ is called an interaction. 
\end{definition} 
%
\begin{definition}[Generalized COOT]
\label{def:ucoot}
Given two divergences $D_1$ and $D_2$, we define the generalized COOT of order $p$ between $\mathbb{X}_1 = ((\mathcal X_1^s, \mu_1^s), (\mathcal X_1^f, \mu_1^f), \xi_1)$ and 
$\mathbb{X}_2 = ((\mathcal X_2^s, \mu_2^s), (\mathcal X_2^f, \mu_2^f), \xi_2)$ by:
\begin{equation}
\small
\label{eq:ucoot}
  \begin{split}
  \inf_{\substack{\pi^s \in \mathcal M^+(\mathcal X_1^s \times \mathcal X_2^s) \\
  \pi^f \in \mathcal M^+(\mathcal X_1^f \times \mathcal X_2^f) \\ m(\pi^s) = m(\pi^f)}} 
  &\underbrace{\iint |\xi_1(x_1^s, x_1^f) - \xi_2(x_2^s, x_2^f)|^p \mathrm d\pi^s\mathrm d \pi^f}_{\text{transport cost of sample-feature pairs}} \\
  &+ \underbrace{\sum_{k=1}^2\lambda_k D_k(\pi^s_{\#k} \otimes \pi^f_{\#k} \vert \mu^s_k \otimes \mu^f_k)}_{\text{mass destruction / creation penalty}},
  \end{split}
\end{equation}
for $\lambda_1, \lambda_2 >0$ and $p \geq 1$.
\end{definition}
As the multiplicative nature between $\pi^s$ and $\pi^f$ leads to an invariance by the scaling map $\alpha \mapsto (\alpha \pi^s, \frac{1}{\alpha}\pi^f)$, for $\alpha > 0$, we further impose the equal mass constraint $m(\pi^s) = m(\pi^f)$. 

It is worth mentioning that the formulation \ref{eq:ucoot} is not the only way to relax the marginal constraints. For example, instead of $D_k(\pi^s_{\#k} \otimes \pi^f_{\#k} \vert \mu^s_k \otimes \mu^f_k)$,  one can consider $D_k(\pi^s_{\#k} \vert \mu^s_k) + D_k( \pi^f_{\#k} \vert \mu^f_k)$, or $D_s(\pi^s_{\#1} \otimes \pi^s_{\#2} \vert \mu^s_1 \otimes \mu^s_2)$, for some divergence $D_s$. However, amongst these choices, ours is the only one which can be recast as a variation of the unbalanced OT problem. This allows us to leverage the known techniques in unbalanced OT to justify the theoritical and practical properties, namely Proposition \ref{prop:existence} and Theorem \ref{thm:ucoot_robust} below.

Note that the problem above is very general and can -- with some additional constraints -- recover exact OT, UOT, GW, UGW, COOT (see Table \ref{t:examples}). In particular, if the measures $(\mu^s_1, \mu^s_2)$ and $(\mu^f_1, \mu^f_2)$ are probability measures, then setting $D_1 = D_2 = \iota_=$ leads to the COOT problem first introduced in the discrete case in~\cite{Redko20} and recently generalized to the continuous setting in~\cite{Chowdhury21b}. In this work, we relax the hard constraints and consider a more flexible formulation with the KL divergence:
\begin{definition}[UCOOT]
   We define Unbalanced COOT ($\ucoot$) as in \eqref{eq:ucoot} with $D_1 = D_2 = \kl$. We write $\ucoot_{\lambda}(\mathbb X_1, \mathbb X_2)$ to indicate the UCOOT between two sample-feature spaces $\mathbb X_1$ and $\mathbb X_2$, for a given pair of hyperparameters $\lambda = (\lambda_1, \lambda_2)$.
\end{definition}
While various properties of the divergences $D_k$ have been extensively studied in the context of unbalanced OT by several authors~\citep{Chizat17, Frogner15}, the concept of sample-feature interaction requires more clarification. Let us consider some simple examples. In the discrete case, we consider $n$ observations of $d$ features represented by matrix $\bA \in \bbR^{n, d}$. In this case, the space $\mathcal X^s$ (resp. $\mathcal X^f$) is not explicitly known but can be characterized by the finite set $[|1, n|]$ (resp. $[|1, d|]$), up to an isomorphism. Assuming that all samples (resp. features) are equally important, the discrete empirical measures can be given by uniform weights $\mu^s = \frac{\mathds 1_{n}}{n}$ (resp. $\mu^f = \frac{\mathds 1_{d}}{d}$). The most natural sample-feature interaction $\xi$ is simply the index function $\xi(i, j) = \bA_{ij}$. In the continuous case, we assume that data stream from a continuous random variable $\ba \sim \mu_s \in \cP(\bbR^d)$ for which an interaction function can be $\xi(\ba, j) = \ba_j$. 

\begin{table*}[t]
  \centering
  \begin{tabular}{*7l}    
      \toprule
		  Requirements & OT & GW & COOT & semi-d. COOT & UCOOT \\
      \midrule
      Shape of inputs & $d_1 = d_2$ & $n_1 = d_1$, $n_2 = d_2$& -- & -- & -- \\
      Coupling constraint & $\pi^f = \mathrm I_{d_1} = \mathrm I_{d_2}$ & $\pi^f = \pi^s$ & -- & -- & -- \\ 
      Scalar function & $\xi(i, j) = \bA_{ij}$ & $\xi(i, j) = \text{dist}( \bA_{i.}, \bA_{j.} ) $ & $\xi(i, j) = \bA_{ij}$ & $\xi(\ba, j) = \ba_j$ & $\xi(i, j) = \bA_{ij}$ \\
      Divergence & $\iota_=$ & $\iota_=$ & $\iota_=$ & $\iota_=$ &   KL \\
		\bottomrule 
		\hline
		\end{tabular}
		\caption{Conditions under which different OT formulations fall within the generalized framework of Definition~\ref{def:ucoot}. ``semi-d'' refers to ``semi-discrete'' setting, where $\mu_s$ is a continuous probability and $\mu_d = \mathds 1_d/d$. Here, $\mathrm I_d$ is the identity matrix in $\mathbb R^d$.
    \label{t:examples}}
\end{table*}


\begin{proposition} \label{prop:existence}
 For any $D_1$, $D_2 \in \{\iota_=, \kl\}$, Problem \ref{def:ucoot} (in Equation \ref{eq:ucoot}) admits a minimizer.
\end{proposition}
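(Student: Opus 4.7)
The plan is to apply the direct method of the calculus of variations. First, I would verify that the infimum is finite by exhibiting an explicit feasible point, namely a scaled product $\pi^s = c_s\, \mu_1^s \otimes \mu_2^s$ and $\pi^f = c_f\, \mu_1^f \otimes \mu_2^f$, with $c_s, c_f$ chosen so that $m(\pi^s) = m(\pi^f)$ and, in the $\iota_=$ cases, the marginal identities hold. The transport cost at this point is finite because $\xi_k \in L^p(\mu_k^s \otimes \mu_k^f)$ and the bound $|\xi_1 - \xi_2|^p \leq 2^{p-1}(|\xi_1|^p + |\xi_2|^p)$ factors through the product structure. The KL (resp.\ indicator) penalties are $0$ at this choice, so the objective is finite; let $(\pi^s_n, \pi^f_n)_n$ be a minimizing sequence.

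The first genuine step is to obtain a uniform bound on the masses $m(\pi^s_n) = m(\pi^f_n) =: M_n$. When $D_k = \iota_=$, the marginal constraints force $M_n^2 = m(\mu_k^s)\,m(\mu_k^f)$, so $M_n$ is constant. When $D_k = \kl$, I would use the elementary lower bound
\[
\kl(\alpha \mid \beta) \geq m(\alpha)\log\!\tfrac{m(\alpha)}{m(\beta)} - m(\alpha) + m(\beta),
\]
applied with $\alpha = \pi^s_{n,\#k} \otimes \pi^f_{n,\#k}$ (so $m(\alpha) = M_n^2$) and $\beta = \mu_k^s \otimes \mu_k^f$. Since this lower bound is coercive in $M_n$ and the objective is bounded along the minimizing sequence, $M_n$ stays uniformly bounded. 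Because $\mathcal X_1^s \times \mathcal X_2^s$ and $\mathcal X_1^f \times \mathcal X_2^f$ are compact, the Banach--Alaoglu/Prokhorov theorem then gives weak relative compactness, and after passing to a subsequence $\pi^s_n \rightharpoonup \pi^{s,*}$ and $\pi^f_n \rightharpoonup \pi^{f,*}$.

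I would then establish weak lower semicontinuity of every term. For the penalty, the KL divergence is jointly weakly lsc, the marginal maps $\pi \mapsto \pi_{\#k}$ are weakly continuous on measures over compact sets, and the tensor product $(\alpha,\beta) \mapsto \alpha \otimes \beta$ is also weakly continuous there; combining these gives lower semicontinuity of $\pi \mapsto D_k(\pi^s_{\#k}\otimes\pi^f_{\#k}\mid \mu_k^s \otimes \mu_k^f)$ in both the $\iota_=$ and $\kl$ cases. For the transport cost, weak convergence of $\pi^s_n$ and $\pi^f_n$ on compact spaces implies weak convergence of the product $\pi^s_n \otimes \pi^f_n$, and the integrand $|\xi_1(x_1^s, x_1^f) - \xi_2(x_2^s, x_2^f)|^p$ is nonnegative, so by approximating it from below by bounded continuous functions (standard density of $C_b$ in $L^p$ combined with monotone convergence) and applying Portmanteau, the integral is weakly lsc. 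Finally, the constraint $m(\pi^s) = m(\pi^f)$ passes to the limit because total mass is weakly continuous on compact spaces. Lower semicontinuity of the full objective at $(\pi^{s,*}, \pi^{f,*})$ together with its minimizing property concludes the proof.

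The main obstacle I anticipate is precisely this lower semicontinuity of the bilinear cost: both the integrand and the integrating measure depend on the variables being varied, and $\xi_1, \xi_2$ are only assumed to be $L^p$ rather than continuous. The cleanest way around this is the approximation argument sketched above, which relies on the compactness of the spaces; in the $\kl$ case, absolute continuity enforced by finiteness of the penalty gives an extra handle, while in the $\iota_=$ case the fixed marginals reduce the problem to lower semicontinuity of a linear functional in $(\pi^s, \pi^f)$, which simplifies the argument considerably.
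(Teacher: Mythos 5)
Your overall strategy is the same as the paper's: exhibit a feasible pair with finite objective, obtain coercivity in the total mass (your KL lower bound $\kl(\alpha\,\vert\,\beta)\ge m(\alpha)\log\frac{m(\alpha)}{m(\beta)}-m(\alpha)+m(\beta)$ is exactly the Jensen-inequality step in the paper's proof), extract weakly convergent subsequences of $\pi^s_n$ and $\pi^f_n$ via Banach--Alaoglu on the compact product spaces, use that tensor products of weakly convergent bounded sequences converge to the tensor product of the limits (the paper invokes an extension of Billingsley's Theorem 2.8 for precisely this, which is what keeps the limit inside the feasible set of product couplings), note that the equal-mass constraint passes to the limit, and conclude by lower semicontinuity.

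The one step that does not hold as written is your justification of lower semicontinuity of the transport cost when $\xi_1,\xi_2$ are only $L^p$. Density of $\mathcal C_b$ in $L^p$ gives approximation in norm with respect to the \emph{fixed} reference measure, not a pointwise monotone approximation from below: the supremum of continuous functions lying below a merely measurable cost is its lower semicontinuous envelope, which can be strictly smaller than the cost itself, so Portmanteau only yields the liminf inequality for that envelope. Nor can you control the approximation error along the minimizing sequence, since that error is an integral against $\pi^s_n\otimes\pi^f_n$, which need not be absolutely continuous with bounded density with respect to $\mu_1^s\otimes\mu_2^s\otimes\mu_1^f\otimes\mu_2^f$ (finiteness of the KL penalty constrains only the product of the marginals, and gives no density bound). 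The paper does not attempt this route: it simply invokes lower semicontinuity of the objective, which is legitimate exactly when $|\xi_1-\xi_2|^p$ is itself lower semicontinuous -- and indeed the appendix version of the statement imposes a compact-sublevel (hence l.s.c.) condition on the cost in the non-superlinear case, while continuity of the interactions is assumed where needed elsewhere (Assumption 1). The clean repair of your argument is therefore to work with continuous (or l.s.c.) interactions, for which weak convergence of $\pi^s_n\otimes\pi^f_n$ on the compact space gives the liminf inequality directly, rather than trying to reach general $L^p$ costs by a $\mathcal C_b$-approximation; your remark that the $\iota_=$ case reduces to a linear functional does not remove the difficulty either, since linearity does not make an integral against a non-l.s.c. integrand weakly lower semicontinuous.
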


\begin{remark}
The existence of minimizer shown in Proposition~\ref{prop:existence} can be extended to a larger family of Csiszár divergences~\citep{Csiszar63}. A general proof is given in the Appendix.
\end{remark}

\paragraph{UCOOT and perfect alignment.} 
Suppose that $\mathbb X_1$ and $\mathbb X_2$ are two finite sample-feature spaces such that $(\mathcal X^s_1, \mathcal X^s_2)$ and $(\mathcal X^f_1, \mathcal X^f_2)$ have the same cardinality and are equipped with the uniform measures $\mu_1^s = \mu_2^s$, $\mu_1^f = \mu_2^f$. Then $\ucoot_{\lambda}(\mathbb X_1, \mathbb X_2) = 0$ if and only if there exist perfect alignments between rows (samples) and between columns (features) of the interaction matrices $\xi_1$ and $\xi_2$.

\section{Robustness of COOT and UCOOT} \label{sec:robustness}


When discussing the concept of robustness, outliers are often considered as samples not following the underlying distribution of the data. In our general context of sample-feature alignments, we consider a \emph{pair} $(x^s, x^f) \in \mathcal X^s \times \mathcal X^f$ to be an outlier if the magnitude of $|\xi(x^s, x^f)|$ is abnormally larger than other interactions between $\mathcal X^s$ and $\mathcal X^f$. As a result, such outliers lead to abnormally large transportation costs $|\xi_1 - \xi_2|$. To study the robustness of COOT and UCOOT, we consider an outlier scenario where the marginal data distributions are contaminated by some additive noise distribution.
\begin{assumption}
\label{assump:robust}
Consider two sample-feature spaces $\bbX_k = ((\mathcal X^s_k, \mu^s_k), (\mathcal X^f_k, \mu^f_k), \xi_k)$, for $k=1,2$. Let $\varepsilon^s$ (resp. $\varepsilon^f$) be a probability measure with compact support $\cO^s$ (resp. $\cO^f$). For $a \in \{s, f\}$, define the noisy distribution $\widetilde{\mu}^a = \alpha_a \mu^a + (1-\alpha_a) \varepsilon^a$, where $\alpha_a \in [0,1]$. We assume that $\xi_1$ is defined on $(\mathcal X^s_1 \cup \cO^s) \times (\mathcal X^f_1 \cup \cO^f)$ and that $\xi_1, \xi_2$ are continuous on their supports. We denote the contaminated sample-feature space by $\widetilde{\mathbb X_1} = ((\mathcal X^s_1 \cup \cO^s, \widetilde{\mu}^s_1), (\cX^f_1 \cup \cO^f, \widetilde{\mu}^f_1), \xi_1)$. Finally, we define some useful minimal and maximal costs:
  \begin{equation*}
  \small
        \begin{cases}
  \Delta_{0} \eqdef& \min_{
  \substack{
       x_1^s \in \cO^s, x_1^f \in \cO^f  \\
       x_2^s \in \cX_2^s, x_2^f \in \cX_2^f
  }}\quad |\xi_1(x_1^s, x_1^f) - \xi_2(x_2^s, x_2^f)|^p \\
  \Delta_{\infty} \eqdef& \max_{
  \substack{
  x_1^s \in \cX_1^s \cup \cO^s, x_1^f \in \cX_1^f \cup\cO^f \\
  x_2^s \in \cX_2^s, x_2^f \in \cX_2^f
  }} \quad|\xi_1(x_1^s, x_1^f) - \xi_2(x_2^s, x_2^f)|^p    \enspace.
  \end{cases}
  \end{equation*}
Here, $\Delta_{0}$ accounts for the minimal deviation of the cost between the outliers and target support, while $\Delta_{\infty}$ is the maximal deviation between the contaminated source and the target.
\end{assumption}
The exact marginal constraints of COOT enforce conservation of mass. Thus, outliers \emph{must} be transported no matter how large their transportation costs are. This intuition is captured by the following result.
\begin{proposition} (COOT is sensitive to outliers)
Consider $\widetilde{\mathds X_1}, \mathds X_2$ as defined in Assumption \ref{assump:robust}. Then:
 \label{prop:coot-not-robust}
\begin{equation}
    \coot(\widetilde{\mathds X_1}, \mathds X_2) \geq (1 - \alpha_s)(1-\alpha_f)\Delta_0.
\end{equation}
\end{proposition}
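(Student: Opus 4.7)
The plan is to exploit the hard marginal constraints in $\coot$: since $D_1 = D_2 = \iota_=$, any feasible pair $(\pi^s, \pi^f)$ must satisfy $\pi^s_{\#1} = \widetilde{\mu}^s_1$ and $\pi^f_{\#1} = \widetilde{\mu}^f_1$, so the outlier mass coming from the noise components $\varepsilon^s$ and $\varepsilon^f$ cannot be discarded and will incur a cost that is bounded below by $\Delta_0$.

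First, I would observe that because $\varepsilon^s$ (resp.\ $\varepsilon^f$) is a probability measure supported on $\cO^s$ (resp.\ $\cO^f$), we have $\varepsilon^s(\cO^s) = \varepsilon^f(\cO^f) = 1$. Combined with the first-marginal constraint of $\coot$, this yields
\[
\pi^s_{\#1}(\cO^s) \;=\; \widetilde{\mu}^s_1(\cO^s) \;\geq\; (1-\alpha_s)\varepsilon^s(\cO^s) \;=\; 1-\alpha_s,
\]
and analogously $\pi^f_{\#1}(\cO^f) \geq 1-\alpha_f$. These are the two key quantitative facts; no further use of the second marginals is needed.

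Next, for an arbitrary admissible pair $(\pi^s, \pi^f)$, I would lower-bound the transport cost by restricting the integration domain to $(\cO^s \times \cX_2^s) \times (\cO^f \times \cX_2^f)$ and applying the definition of $\Delta_0$:
\begin{align*}
& \iint |\xi_1(x_1^s, x_1^f) - \xi_2(x_2^s, x_2^f)|^p \mathrm d\pi^s \mathrm d\pi^f \\
&\quad\geq \iint \mathbf 1_{\cO^s}(x_1^s)\,\mathbf 1_{\cO^f}(x_1^f)\, |\xi_1 - \xi_2|^p \mathrm d\pi^s \mathrm d\pi^f \\
&\quad\geq \Delta_0 \cdot \pi^s(\cO^s \times \cX_2^s) \cdot \pi^f(\cO^f \times \cX_2^f) \\
&\quad= \Delta_0 \cdot \pi^s_{\#1}(\cO^s) \cdot \pi^f_{\#1}(\cO^f) \;\geq\; \Delta_0 (1-\alpha_s)(1-\alpha_f),
\end{align*}
where the factorization on the third line is a direct consequence of Fubini applied to the product measure $\pi^s \otimes \pi^f$. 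Taking the infimum over feasible $(\pi^s, \pi^f)$ yields the claim.

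The only subtlety worth flagging is that the sets $\cX_1^a$ and $\cO^a$ may overlap, in which case $\widetilde{\mu}^a_1(\cO^a)$ could exceed $1-\alpha_a$; but since we only need a lower bound, the inequality $\widetilde{\mu}^a_1(\cO^a) \geq (1-\alpha_a)\varepsilon^a(\cO^a)$ is all that is required, and the rest is a clean application of Fubini together with the definition of $\Delta_0$. There is no real obstacle here — the robustness failure is structural, coming entirely from the fact that mass placed on the outlier region by $\varepsilon^s$ and $\varepsilon^f$ must be transported somewhere in $\cX_2^s \times \cX_2^f$, and every such assignment pays at least $\Delta_0$ per unit of product mass.
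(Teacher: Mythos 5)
Your proposal is correct and follows essentially the same argument as the paper: restrict the transport integral to $\cO^s \times \cX_2^s \times \cO^f \times \cX_2^f$, bound the cost below by $\Delta_0$, factorize the product measure into first marginals, and invoke the hard marginal constraints to get the factor $(1-\alpha_s)(1-\alpha_f)$. The only addition is your (correct) remark on possible overlap between $\cX_1^a$ and $\cO^a$, which the paper handles implicitly by stating the last step as an inequality.
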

Whenever the outlier proportion $(1-\alpha_s)(1-\alpha_f)$ is positive, COOT increases with the distance between the supports of the outliers and those of the clean data. Thus, the right hand side of~\eqref{prop:coot-not-robust} can be made arbitrarily large by taking outliers far from the supports of the clean data. 

We can now state our main theoretical contribution. Relaxing the marginal constraints leads to a loss that saturates as outliers get further from the data:

\begin{theorem}
\label{thm:ucoot_robust}
(UCOOT is robust to outliers)
Consider two sample-feature spaces $\widetilde{\mathds X_1}, \mathds X_2$ as defined in Assumption \ref{assump:robust}. Let $\delta \eqdef 2(\lambda_1 + \lambda_2)(1 - \alpha_s\alpha_f)$ and $K = M + \frac{1}{M}\ucoot(\mathbb X_1, \mathbb X_2) +\delta$, where $M= m(\pi^s) = m(\pi^f)$ is the transported mass between clean data. Then:
   \begin{equation*} 
    \begin{split}
      \ucoot(\widetilde{\mathbb X_1}, \mathbb X_2)
      &\leq \alpha_s \alpha_f \ucoot(\mathbb X_1, \mathbb X_2) \\
      &+ \delta M \left[ 1 - \exp \left( {- \frac{\Delta_{\infty}(1+M) + K}{\delta M}} \right) \right].
    \end{split}
  \end{equation*}
\end{theorem}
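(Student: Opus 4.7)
The plan is to upper-bound $\ucoot(\widetilde{\mathbb X_1},\mathbb X_2)$ by the value of its objective at an explicit admissible candidate built from an optimizer of the clean problem, and then optimize a single free scaling parameter. Let $(\pi^s,\pi^f)$ attain $\ucoot(\mathbb X_1,\mathbb X_2)$, with common transported mass $M$, and for $\tau \geq 0$ set $(\tilde\pi^s,\tilde\pi^f) := (\tau\pi^s,\tau\pi^f)$. Both measures have mass $\tau M$, so the equal-mass constraint is preserved, and by viewing $\pi^s,\pi^f$ as concentrated on the clean subset of the enlarged product spaces this pair is admissible for $\ucoot(\widetilde{\mathbb X_1},\mathbb X_2)$.

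Next, I would evaluate the UCOOT objective on this candidate. The transport cost becomes exactly $\tau^2$ times the clean transport cost. For the two KL penalties, the key structural observation is that on the support of $\pi^s_{\#1}\otimes\pi^f_{\#1}$ the contaminated product $\widetilde\mu_1^s\otimes\widetilde\mu_1^f$ reduces to $\alpha_s\alpha_f\,\mu_1^s\otimes\mu_1^f$, while the three ``noise'' components $\mu_1^s\otimes\varepsilon^f$, $\varepsilon^s\otimes\mu_1^f$, $\varepsilon^s\otimes\varepsilon^f$ live on disjoint supports and contribute only through the $m(\nu)$-term in the definition of $\kl$. Combining this with the scaling identity $\kl(aX\vert bY)=a\log(a/b)\,m(X)+a\,\kl(X\vert Y)+(b-a)\,m(Y)$ expresses each penalty as a linear-plus-entropic function of $\tau^2$, whose coefficients involve only the clean KL values, $\alpha_s\alpha_f$, $\lambda_1, \lambda_2$ and $M$.

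Aggregating, I would massage the resulting bound into the template
\[
\ucoot(\widetilde{\mathbb X_1},\mathbb X_2) \leq \alpha_s\alpha_f\,\ucoot(\mathbb X_1,\mathbb X_2) + \tau A + \delta M\bigl(\tau\log\tau - \tau + 1\bigr),
\]
valid for every $\tau \geq 0$, where $A$ collects all remaining linear corrections. Bounding any residual pointwise interaction deviation by the worst-case $\Delta_\infty$ and the ambient mass factors by $M$, $\ucoot(\mathbb X_1,\mathbb X_2)/M$ and $\delta$ gives $A = \Delta_\infty(1+M) + K$. The elementary one-dimensional minimization $\min_{\tau\geq 0}\{\tau A + \delta M(\tau\log\tau - \tau + 1)\} = \delta M\bigl(1 - e^{-A/(\delta M)}\bigr)$, attained at $\tau^\star = e^{-A/(\delta M)}$, then delivers the stated bound exactly.

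The hardest part is producing the entropic template with the right coefficient $\delta M = 2(\lambda_1+\lambda_2)(1-\alpha_s\alpha_f)M$, rather than the more naive $(\lambda_1+\lambda_2)M^2$ that a plain ``scale by $\tau$'' construction alone would yield. Matching this likely requires either enriching the candidate with an additional outlier-absorbing component parameterized by the same $\tau$, or splitting each of the two KL penalties symmetrically into a dominant contribution of weight $\alpha_s\alpha_f$ and a slack contribution of weight $(1-\alpha_s\alpha_f)$ that is then dominated uniformly by $\Delta_\infty$ times the appropriate mass prefactors; the factor of $2$ in $\delta$ emerges from summing these two symmetric contributions across the two sides.
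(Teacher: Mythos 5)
Your overall scaffolding coincides with the paper's: plug an explicit admissible pair built from a clean optimizer into the noisy objective, massage it into the template $\alpha_s\alpha_f\,\ucoot(\mathbb X_1,\mathbb X_2)+a\,t+b\,\varphi(t)$ with $\varphi(t)=t\log t-t+1$, and conclude with the one-dimensional fact $\min_{t}\{a\,t+b\,\varphi(t)\}=b(1-e^{-a/b})$, which is exactly the paper's auxiliary lemma. The gap is in the candidate itself, and it is not cosmetic. A pure rescaling $(\tau\pi^s,\tau\pi^f)$ of the clean solution transports no mass from $\cO^s\times\cO^f$, so $\Delta_\infty$ can never enter the bound, while the theorem's exponent explicitly contains $\Delta_{\infty}(1+M)$; the transport term comes out as $\tau^2$ times the \emph{full} clean cost rather than the $t$-free leading term $\alpha_s\alpha_f\,\ucoot(\mathbb X_1,\mathbb X_2)$; and the KL penalties produce an entropic coefficient of order $(\lambda_1+\lambda_2)M^2$ together with constants such as $-M^2\log(\alpha_s\alpha_f)$ and the reference mass $(1-\alpha_s\alpha_f)$ sitting on the outlier region, none of which can be absorbed into the stated constants $\Delta_\infty(1+M)+K$ and $\delta M$. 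You noticed the coefficient mismatch yourself, but the resolution is left as two alternative guesses, and that resolution is precisely the crux of the proof. In addition, your claim that $\widetilde\mu_1^s\otimes\widetilde\mu_1^f$ ``reduces to'' $\alpha_s\alpha_f\,\mu_1^s\otimes\mu_1^f$ on the clean support tacitly assumes $\cO^s,\cO^f$ are disjoint from $\cX_1^s,\cX_1^f$, which Assumption 1 does not grant.

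What the paper actually does is take $\widetilde\pi^s=\alpha_s\pi^s+t(1-\alpha_s)\,\varepsilon^s\otimes\mu_2^s$ and $\widetilde\pi^f=\alpha_f\pi^f+t(1-\alpha_f)\,\varepsilon^f\otimes\mu_2^f$: the clean part keeps the fixed weight $\alpha_a$ matching the contamination level, and only the outlier-absorbing product component is scaled by $t$. Expanding the cost bilinearly then yields $\alpha_s\alpha_f$ times the clean cost plus cross and noise terms bounded by $t\,\Delta_\infty$ times the relevant masses (this is where $\Delta_\infty(1+M)$ originates), and the marginal penalties are treated through the product identity
\begin{equation*}
\kl(\rho^s_{\#1}\otimes\rho^f_{\#1}\,\vert\,\mu^s\otimes\mu^f)=\sum_{k\neq i} m(\rho^i)\,\kl(\rho^k_{\#1}\vert\mu^k)+\prod_{k}\bigl(m(\rho^k)-1\bigr)
\end{equation*}
combined with joint convexity of KL, $\kl\bigl(\alpha_k\pi^k_{\#1}+t(1-\alpha_k)\varepsilon_k\,\vert\,\alpha_k\mu^k_1+(1-\alpha_k)\varepsilon_k\bigr)\le\alpha_k\,\kl(\pi^k_{\#1}\vert\mu^k_1)+(1-\alpha_k)\varphi(t)$, which is what makes $\varphi(t)$ appear with the coefficient $\delta=2(\lambda_1+\lambda_2)(1-\alpha_s\alpha_f)$ and requires no disjoint-support assumption. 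Since none of these steps are carried out (or even correctly specified) in your proposal, the argument as written does not establish the theorem.
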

The proof of Theorem~\ref{thm:ucoot_robust} is provided in the Appendix and inspired from~\citep{Fatras21}, but in a much more general setting: (1) it covers both sample and feature outliers and (2) considers a noise distribution instead of a Dirac.
Note that the inequality~\eqref{prop:coot-not-robust} indicates that outliers can make COOT arbitrary large, while UCOOT is upper bounded and discards the mass of outliers with high transportation cost.

\setlength{\columnsep}{10pt}%
\setlength{\intextsep}{0pt}
\begin{wrapfigure}[12]{r}{0.24\textwidth}
  \includegraphics[width=1.1\linewidth]{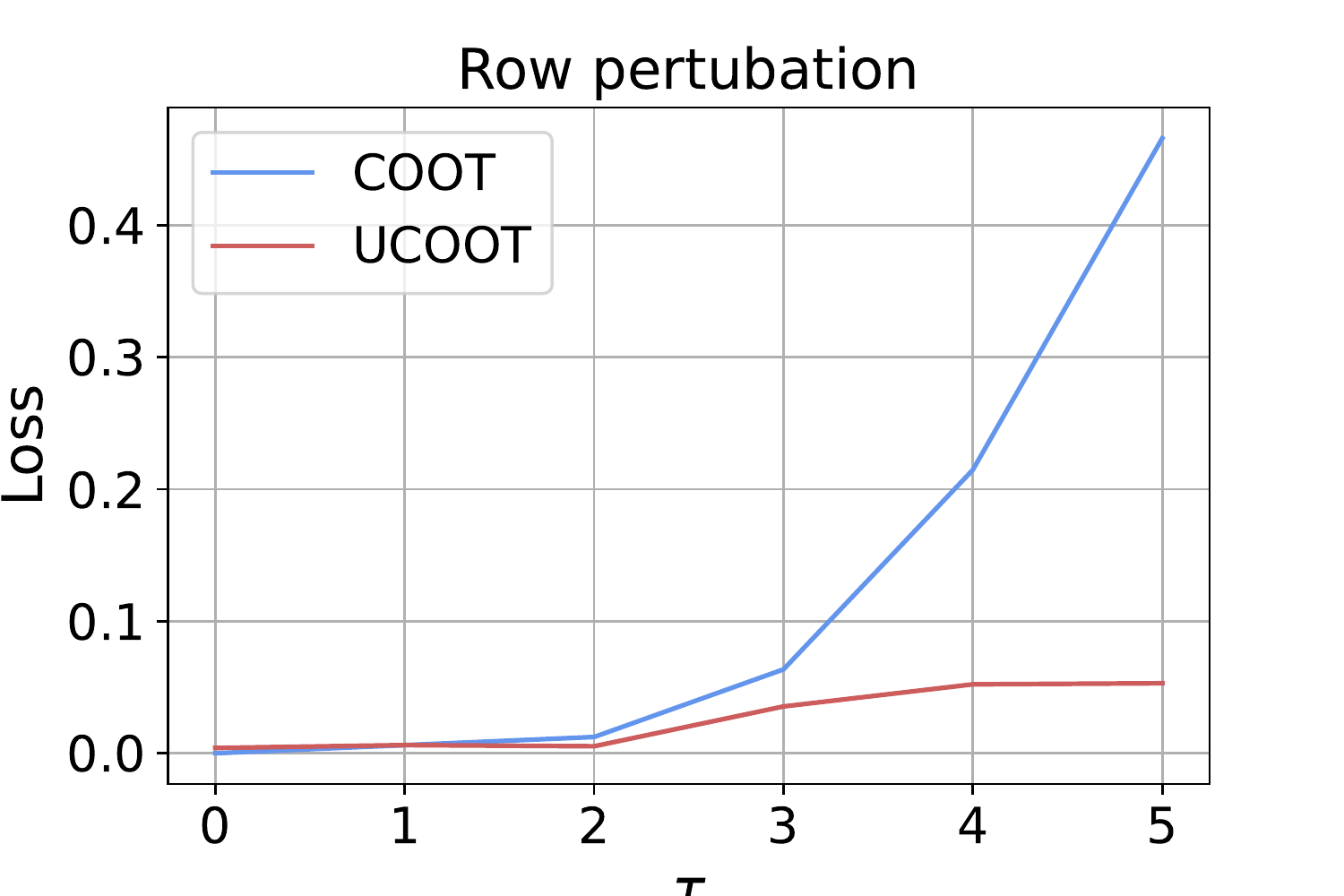}
  \caption{Sensitivity of COOT and UCOOT under the presence of outliers. 
  \label{fig:robust}}
\end{wrapfigure}

This is well illustrated in Figure~\ref{fig:robust}, where we simulate outliers by adding a perturbation to a row of the interaction matrix. More precisely, we first generate a matrix $\bA \in \mathbb R^{20,15}$ by $\bA_{ij} = \cos(\frac{i}{20} \pi) + \cos(\frac{j}{15} \pi)$. Then, we replace its last row by $\tau \mathds 1_{15}$, for $\tau \geq 0$. Figure~\ref{fig:robust} depicts COOT and UCOOT between $\bA$ and its modified version as a function of $\tau$. The higher the value of $\tau$, the more likely that the last row contains the interaction of outliers. Consequently, as $\tau$ increases, so does COOT but at a much higher pace, whereas UCOOT remains stable.

It should be noted that, with minimal adaptation, theorem~\ref{thm:ucoot_robust} also holds for the unbalanced GW (UGW) distance. This provides a theoretical explanation of the empirical observation in \citep{Sejourne20} that unlike GW, the UGW distance is also robust to outliers.

\section{Numerical aspects} 
Solving COOT-type problems, in general, is not trivial. As highlighted in \cite{Redko20}, the balanced case corresponds to a convex relaxation of the bilinear assignment problem, which seeks the pair of permutations minimizing the transport cost. Here we argue that relaxing the marginal constraints makes the problem easier in two different aspects: (1) the obtained problem is easier to solve through a sequence of GPU friendly iterations; (2) regularization leads to lower alignment costs and thus better local minima. In this section, we first describe how to compute UCOOT in practice.

\paragraph{Optimization strategy.}
We consider two tabular datasets $\bA \in \bbR^{n_1, d_1}$ and $\bB \in \bbR^{n_2, d_2}$. Let $u_k$ be the uniform histogram over sample-feature pairs: $u_k \eqdef \frac{1}{n_kd_k}\mathds 1_{n_k} \otimes \mathds 1_{d_k}$, for $k=1, 2$. For the sake of simplicity, we assume uniform weights over both samples and features. Computing UCOOT can be done using block-coordinate descent (BCD) both with and without entropy regularization. More precisely, given a hyperparameter $\varepsilon \geq 0$, discrete UCOOT can be written as:
\begin{equation*} 
\small
\begin{split}
    \label{eq:ucoot-discrete-2}
  &\min_{\substack{\pi^s, \pi^f \\ 
  \iffalse \in \bbR_+^{n_1, n_2} \\ \pi^f \in \bbR_+^{d_1, d_2} \\ \fi m(\pi^s) = m(\pi^f)}} 
  \sum_{i, j, k, l} (\bA_{ik} - \bB_{jl})^2\pi^s_{ij}\pi^f_{kl} + 
  \lambda_1 \kl(\pi^s \mathds 1_{n_1} \otimes \pi^f \mathds 1_{d_1} \vert u_1 )  \\
  &+ \lambda_2 \kl({\pi^s}^\top \mathds 1_{n_2} \otimes {\pi^f}^{\top} \mathds 1_{d_2} \vert u_2) + 
  \varepsilon \text{KL}( \pi^s \otimes \pi^f \vert \mu^s_1 \otimes \mu_2^s \otimes \mu_1^f \otimes \mu_2^f).
\end{split}
\end{equation*}

\begin{algorithm}
    \caption{BCD algorithm to solve UCOOT \label{alg:bcd}}
    \begin{algorithmic}
      \STATE {\bfseries Input:} $\bA \in \bbR^{n_1, d_1}, \bB \in \bbR^{n_2, d_2}$, $\lambda_1, \lambda_2, \varepsilon$
      \STATE Initialize $\pi^s$ and $\pi^f$ 
      \REPEAT
      \STATE Update $\pi^s$ using Sinkhorn or NNPR
      \STATE Rescale $\pi^s = \sqrt{\frac{m(\pi^f)}{m(\pi^s)}} \pi^s$
      \STATE Update $\pi^s$ using Sinkhorn or NNPR  
      \STATE Rescale $\pi^f = \sqrt{\frac{m(\pi^s)}{m(\pi^f)}} \pi^f$
      \UNTIL{convergence}
	\end{algorithmic}
\end{algorithm}

The only difference between $\varepsilon = 0$ and $\varepsilon > 0$ lies in the inner-loop algorithm used to update one of transport plans $(\pi^s, \pi^f)$ while the other one remains fixed. Note that both cases allow for  implementations of a scaling multiplicative algorithm that can be parallelized on GPUs. 
For $\varepsilon > 0$, updating each transport plan boils down to an entropic UOT problem, which can be solved efficiently using the unbalanced variant of Sinkhorn's algorithm~\cite{Chizat18a}.
The main benefit of entropy regularization is to reduce the number of variables from $(n_1 \times n_2) + (d_1 \times d_2)$ to $n_1 + n_2 + d_1 + d_2$. Moreover, by taking $\varepsilon$ sufficiently small, we can recover solutions close to those in the non-entropic case.
For $\varepsilon=0$, the non-regularized UOT problem can be recast as a non-negative penalized regression (NNPR)~\cite{Chapel21}. This problem can be solved using a majorization-minimization algorithm which leads to a multiplicative update on the transport plan. 
For the sake of reproducibility, we provide the details on the optimization scheme of both algorithms in the Appendix.

\section{Experiments} \label{sec:experiments}
\subsection{Illustration and interpretation of UCOOT on MNIST images}

\begin{wrapfigure}[12]{r}{0.24\textwidth}
  \includegraphics[trim={0.2cm 0.45cm 0.2cm 0.35cm}, clip, width=1.05\linewidth]{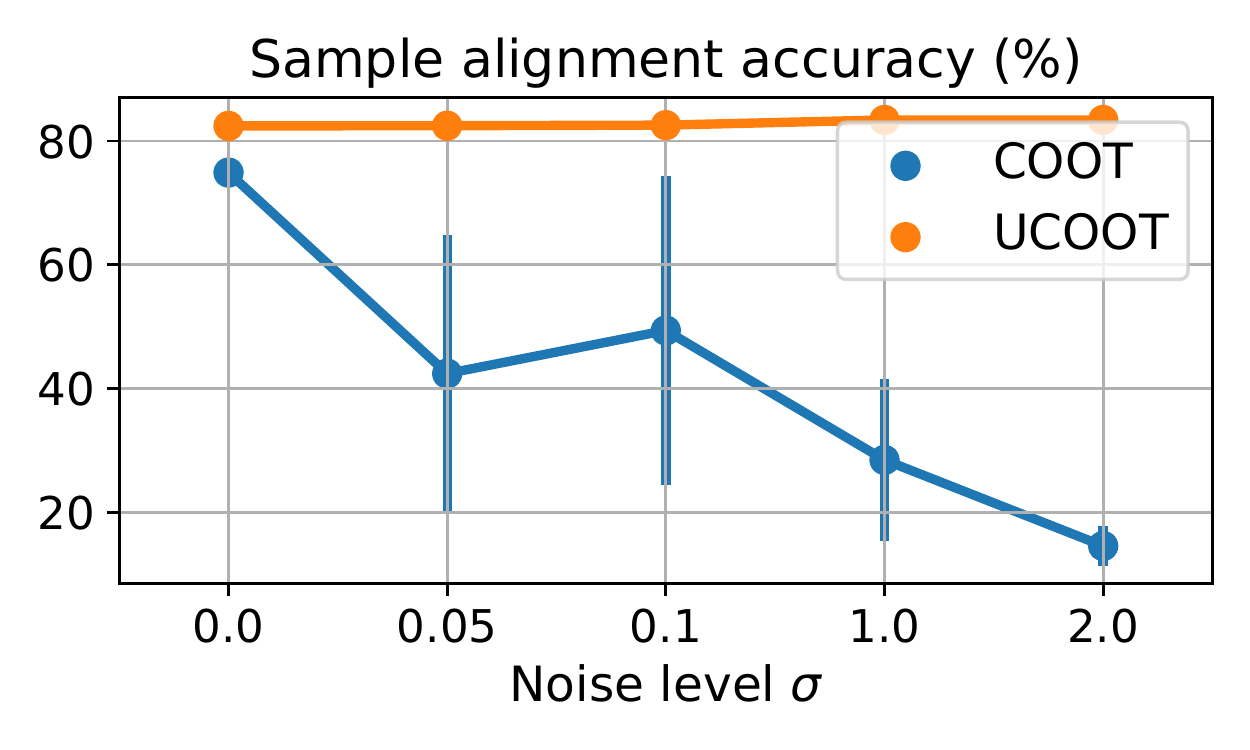}
  \caption{Robustness of UCOOT vs. COOT on MNIST example. The accuracy is averaged on 20 trials.
  \label{f:mnist-sigma}}
\end{wrapfigure}

We illustrate the robustness of UCOOT and its ability to learn meaningful feature alignments under the presence of both sample and feature outliers in the MNIST dataset. We introduce the feature outliers by applying a handcrafted transformation $\varphi_\sigma$ that performs a zero-padding (shift), a 45\textdegree\ rotation, a resize to (28, 34) and adds Gaussian noise $\cN(0, \sigma^2)$ entries to the first 10 columns of the image.
Figure~\ref{f:mnist-example} (a) shows some examples of original and transformed images. We randomly sample 100 images per class (1000 total) from $X = \text{MNIST}$ and $Y = \varphi_{\sigma}(\text{MNIST})$. Regarding the sample outliers, we add 50 random images with uniform entries in [0, 1] to the target data $Y$. We then compute the optimal COOT and UCOOT alignments shown in Figure~\ref{f:mnist-example} (b) and (c). The flexibility of UCOOT with respect to mass transportation allows it to completely disregard: (1) noisy and uninformative pixels (features), which are all given the same weight as depicted by (b); (2) all the sample outliers of which none are transported as shown by the last blank column of the alignment (c). Moreover, notice how the color-coded input image is transformed according to the transformation $\varphi_{\sigma}$ despite the fact that no spatial information is provided  in the OT problem. On the other hand, a very small perturbation ($\sigma = 0.01)$ is enough for the sample alignment given by COOT to lose its block-diagonal dominant structure (class information is lost), while the UCOOT alignment remains unscathed. One may wonder whether the performance of UCOOT would still hold for different values of $\sigma$. Figure~\ref{f:mnist-sigma} answers this question positively. For $\sigma > 0$, we compute the average accuracy (defined by the percentage of mass within the block-diagonal structure) over 20 different runs. The performance of COOT not only degrades with noisier outliers but is also unstable. By contrast, the accuracy of UCOOT remains almost constant regardless of the level of noise.
\begin{figure*}[t]
  \centering
  \includegraphics[trim={0.5cm 4.5cm 1.5cm 2.4cm}, clip, width=1\linewidth]{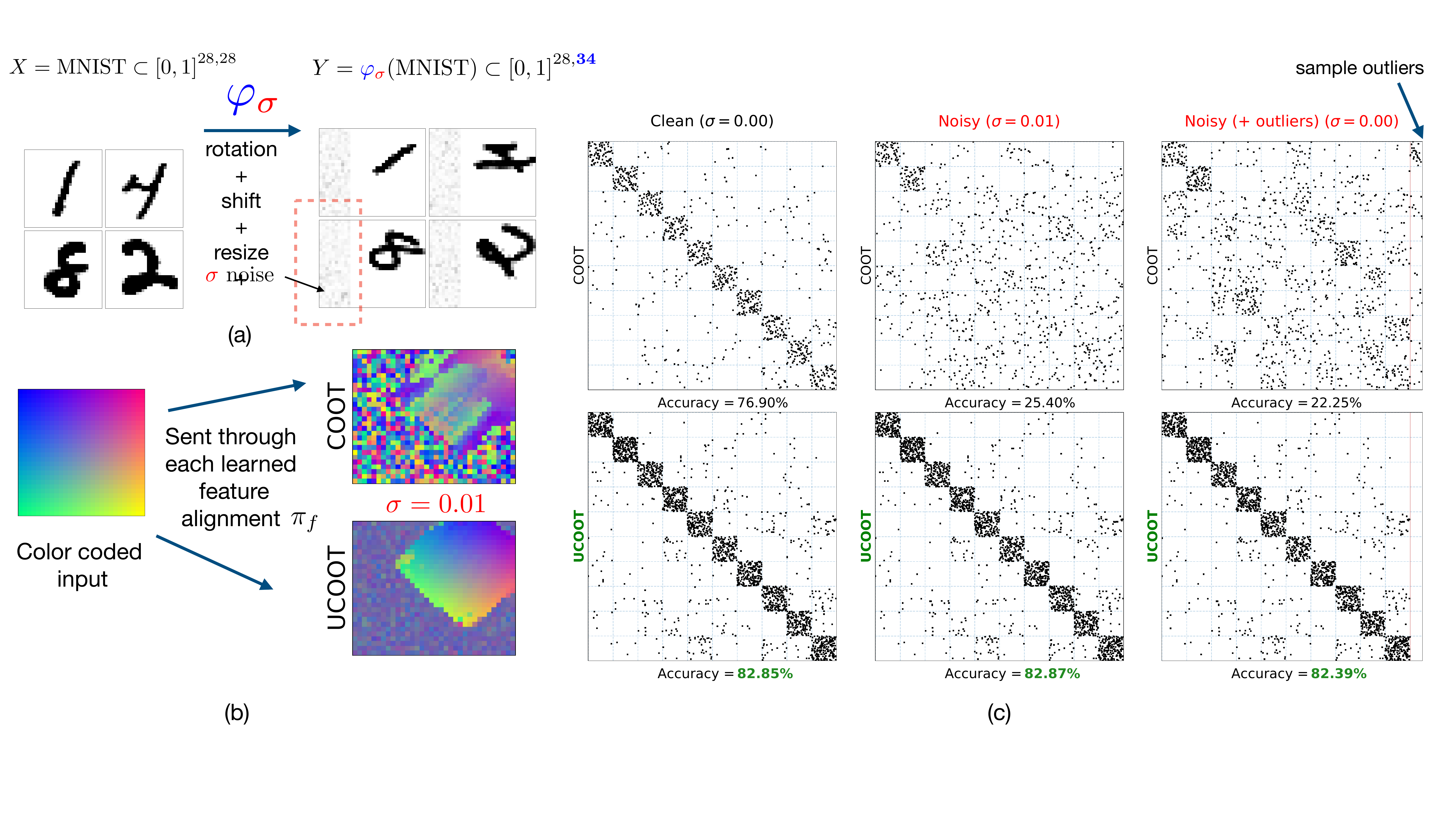}
  \caption{Example illustrating the feature alignment $\pi_f$ learned by UCOOT and its robustness to outliers.
  \textbf{(a)} Visualization of 4 random samples from both datasets. The added Gaussian noise only affects the first 10 columns of the images and is different across images. \textbf{(b)} The barycentric mapping (see Appendix for details) defined by UCOOT learns the transformation defined by $\varphi_\sigma$ while disregarding non-informative features. \textbf{(c)} Alignments across samples from $X$ and $Y$. We contaminated the target $Y$ with 50 sample outliers (images with uniform entries in $[0,1]$). A very small amount of noise is sufficient to derail COOT. Unlike COOT, UCOOT does not transport any outlier sample. Accuracy is computed as the percentage of mass within the block-diagonal structure.
  \label{f:mnist-example}}
\end{figure*}

\subsection{Heterogeneous Domain Adaptation (HDA)}

We now investigate the application of discrete UCOOT in unsupervised Heterogeneous Domain Adaptation (HDA). It is a particularly difficult problem where one wants to predict classes on unlabeled data using labeled data lying in a different space. OT methods across spaces have recently shown good performance on such tasks, in particular using GW distance ~\citep{Yan18} and COOT~\citep{Redko20}.

\paragraph{Datasets and experimental setup.} We consider the Caltech-Office dataset~\citep{Saenko10} containing three domains: 
Amazon (A) ($1123$ images), Caltech-$256$ (C) ($958$ images) and Webcam (W) ($295$ images) 
with 10 overlapping classes amongst them. The image in each domain is representated by the output of the second last layer 
in the Google Net~\citep{Szegedy15} and Caffe Net~\citep{Jia14} neural network architectures, which results in 
$4096$ and $1024$-dimensional vectors, respectively (thus $d_s = 4096, d_t = 1024$). 
We compare $4$ OT-based methods: GW, COOT, UGW, and UCOOT. The hyper-parameters for each method are validated on a unique pair of datasets (W$\rightarrow$W), then fixed for all other pairs in order to provide truly unsupervised HDA generalization. We follow the same experimental setup as in~\citep{Redko20}. For each pair of domains, we randomly choose $20$ samples per class (thus $n_s = n_t = 200$) and 
perform adaptation from CaffeNet to GoogleNet features, then calculate the accuracy of the generated predictions on the target domain using OT label propagation \citep{Redko19a}. 
\begin{wrapfigure}[12]{r}{0.25\textwidth}
  \includegraphics[clip, width=1.05\linewidth]{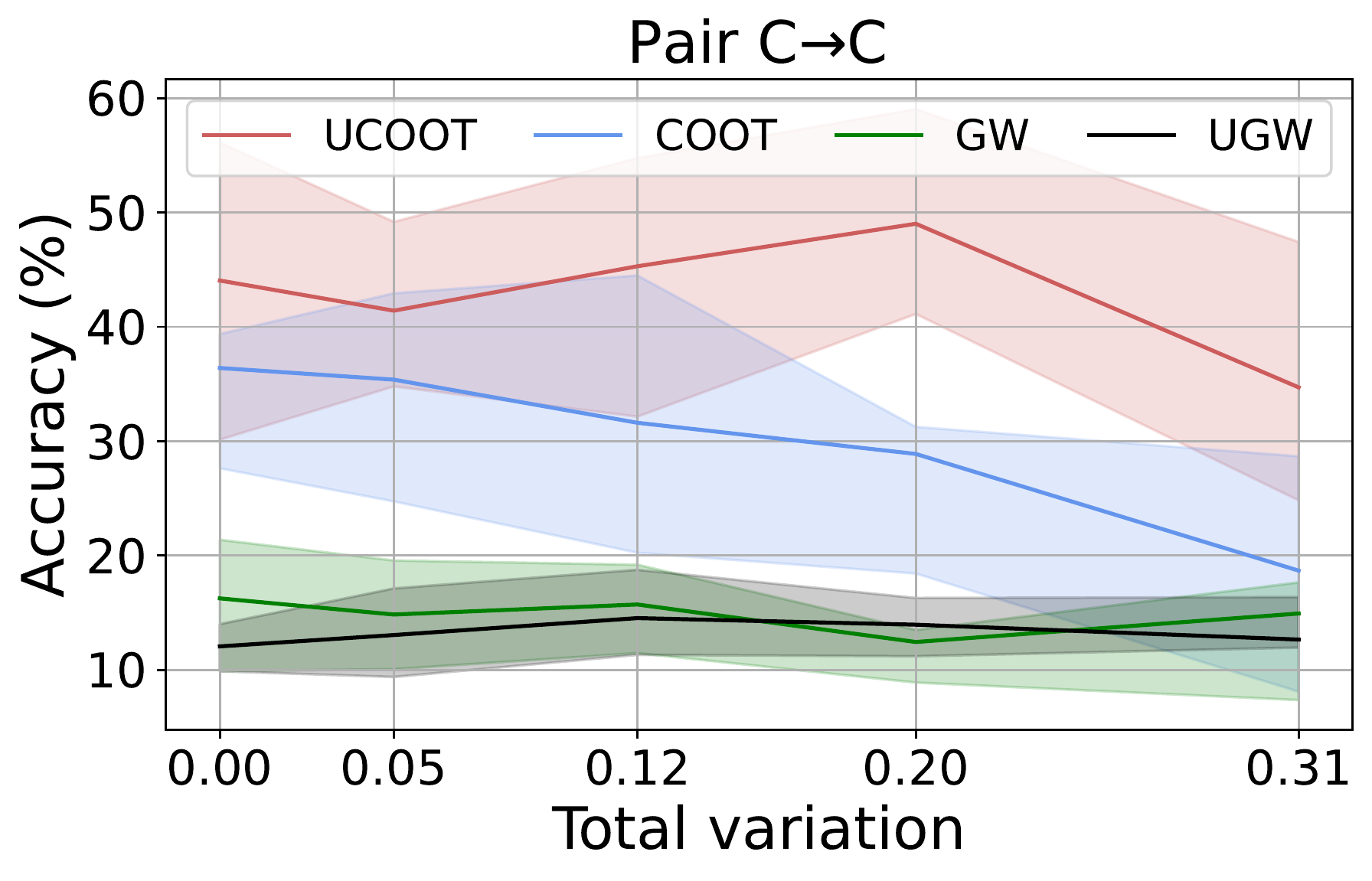}
  \caption{Robustness to class proportion change for increasing TV on the class marginals.\label{f:hda_prop}}
\end{wrapfigure}
This technique uses the OT plan to estimate the amount of mass transported from each class (since the sources are labeled) to a given target sample. The predicted class corresponds to the one which contains the most mass.
We repeat this process $10$ times and calculate the average and standard deviation of the performance.
In both source and target domains, we assign uniform sample and feature distributions. 



\begin{table}[t]
  \centering
  \scriptsize
		\begin{tabular}{c c c c c}
				\toprule
				& \multicolumn{3}{c}{CaffeNet $\to$ GoogleNet} \\
				\midrule
				Domains & GW & UGW & COOT & UCOOT \\
				\midrule
				
				C $\to$ C & 16.25 ($\pm$ 7.54) & 10.85 ($\pm$ 2.13) & 36.40 ($\pm$ 12.94) & \textbf{44.05 ($\pm$ 19.33)} \\
				\hline
				C $\to$ A & 12.95 ($\pm$ 7.74) & 11.60 ($\pm$ 4.86) & 28.30 ($\pm$ 11.78) & \textbf{31.90 ($\pm$ 7.43)} \\
				\hline
				C $\to$ W & 18.95 ($\pm$ 9.43) & 14.15 ($\pm$ 3.98) & 19.55 ($\pm$ 14.51) & \textbf{28.55 ($\pm$ 6.60)} \\
				\hline
				
				A $\to$ C & 16.40 ($\pm$ 8.99) & 10.25 ($\pm$ 5.66) & \textbf{41.80 ($\pm$ 14.81)} & 39.15 ($\pm$ 17.98) \\
				\hline
				A $\to$ A & 14.75 ($\pm$ 15.20) & 20.20 ($\pm$ 6.45) & \textbf{57.90 ($\pm$ 16.84)} & 42.45 ($\pm$ 15.47) \\
				\hline
				A $\to$ W & 14.55 ($\pm$ 8.83) & 20.65 ($\pm$ 4.13) & 42.10 ($\pm$ 7.80) & \textbf{48.55 ($\pm$ 13.06)} \\
				\hline
				
				W $\to$ C & 20.65 ($\pm$ 11.90) & 14.20 ($\pm$ 5.13) & 8.60 ($\pm$ 6.56) & \textbf{69.80 ($\pm$ 14.91)} \\
				\hline
				W $\to$ A & 17.00 ($\pm$ 9.75) & 7.10 ($\pm$ 2.45) & 16.65 ($\pm$ 10.01) & \textbf{30.55 ($\pm$ 10.09)} \\
				\hline
				W $\to$ W & 19.30 ($\pm$ 11.87) & 24.40 ($\pm$ 3.28) & \textbf{75.30 ($\pm$ 3.26)} & 51.50 ($\pm$ 20.51) \\
				\bottomrule
				Average & 16.76 ($\pm$ 10.14) & 14.82 ($\pm$ 4.23) & 36.29 ($\pm$ 10.95) & \textbf{42.94 ($\pm$ 13.93)} \\    
				\bottomrule
			\end{tabular}
	\caption{Unsupervised HDA from CaffeNet to GoogleNet. \label{tab:hda}}
\end{table}

\paragraph{HDA Results and robustness to target shift.} The means and standard deviations of the accuracy on target data are reported in Table \ref{tab:hda} for all the methods and all pairs of datasets. We observe that, thanks to its robustness, UCOOT outperforms COOT on 7 out of 9 dataset pairs, with higher average accuracy but also slightly larger variance. This is because of the difficulty of the unsupervised HDA problem and the instability present in all methods. In particular, GW-based approaches perform very poorly. This may be due to the fact that the pre-trained models contain meaningful but a very high-dimensional vectorial representation of the image. Thus, using the Euclidean distance matrices as inputs not only causes information loss but also is less relevant (see for example, \citep{Aggarwal01}, or Theorem 3.1.1 and Remark 3.1.2 in \citep{Vershynin18}). We also illustrate the robustness of UCOOT to a change in class proportions, also known as target shift. More precisely, we simulate a change in proportion only in the source domain by selecting $20\rho$ samples per class for 4 amongst 10 classes with $\rho$ decreasing from $\rho=1$ to $\rho=0.2$. In this configuration, the classes in the source domain are imbalanced and the unlabeled HDA problem becomes more difficult. We report the performance of all the methods as a function of the Total Variation (TV) between the class marginal distributions on one pair of datasets in Figure~\ref{f:hda_prop}. We can see that UCOOT is quite robust to change in class proportions, while COOT experiences a sharp decrease in accuracy when the class distributions become more imbalanced.

%
\subsection{Single-cell multi-omics alignment}
Finally, we present a real-world application of UCOOT for the alignment of single-cell measurements. Recent advances in single-cell sequencing technologies allow biologists to measure a variety of cellular features at the single-cell resolution, such as expression levels of genes and epigenomic changes in the genome \cite{Buenrostro15,Chen19}, or the abundance of surface proteins \cite{CITEseq}. These multiple measurements produce single-cell multi-omics datasets. These datasets measuring different biological phenomena at the single-cell resolution allow scientists to study how the cellular processes are regulated, leading to finer cell variations during development and diseases. However, it is hard to obtain multiple types of measurements from the same individual cells due to experimental limitations. Therefore, many single-cell multi-omics datasets have disparate measurements from different sets of cells. As a result, computational methods are required to align the cells and the features of the different measurements to learn the relationships between them that help with data analysis and integration. Multiple tools \cite{Pamona, Seurat, Liu2019}, including GW \cite{Pamona, Demetci22} and UGW \cite{Demetci22-2} based methods, have shown good performance for cell-to-cell alignments.
However, aligning both samples and features is a more challenging and critical task that GW and UGW-based methods cannot address 
Here we provide an application of UCOOT to simultaneously align the samples and features in a single-cell multi-omics dataset.
%
%
\begin{figure*}[t]
    \centering
    \includegraphics[trim={0.2cm 0.2cm 0.8cm 0.5cm}, clip, width=0.95\linewidth, keepaspectratio=true]{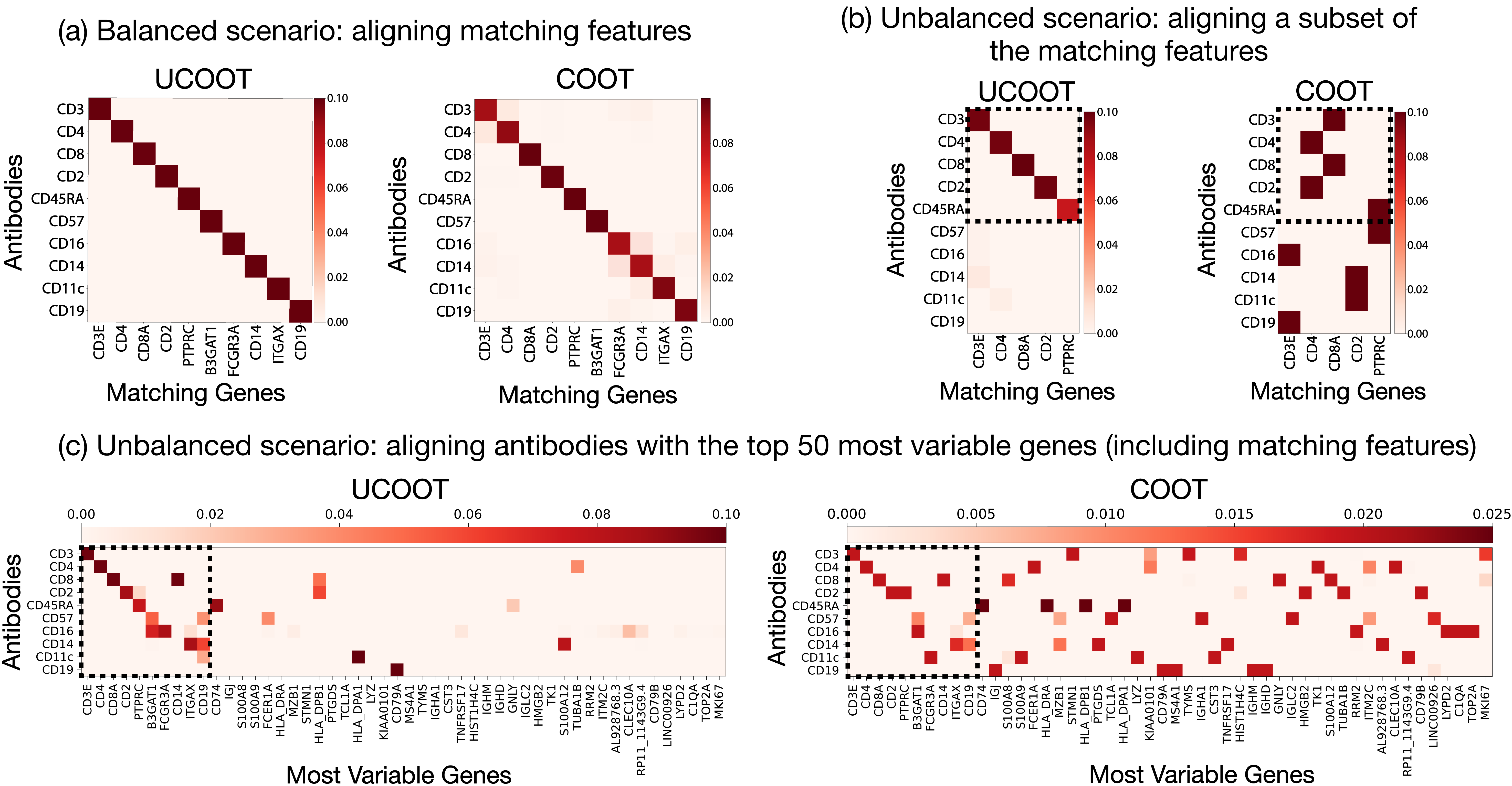}
    \caption{Feature alignments on the single-cell multi-omics dataset of COOT and UCOOT between antibodies (surface proteins) and their matching genes (that encode them). \textbf{(a)} The features are sorted such that the correct alignment would yield a diagonal matrix. \textbf{(b)} Only five of the correct gene matches are kept (the last five genes from (a) are excluded). \textbf{(c)} Alignments between the ten antibodies and the top 50 most variable genes, including the matching genes.
    For \textbf{(b)} and \textbf{(c)}, the diagonal within the dashed square highlights the correct matches.
    Overall, UCOOT gives better feature alignments.
    \label{fig:multiomics}}
\end{figure*}

For demonstration, we choose a dataset generated by the CITE-seq experiment \cite{CITEseq}, which simultaneously measures gene expression and antibody (or surface protein) abundance in single cells. From this dataset, we use 1000 human peripheral blood cells, which have ten antibodies and 17,014 genes profiled. We selected this specific dataset as we know the ground-truth correspondences on both the samples (i.e., cells) and the features (i.e., genes and their encoded antibodies), thus allowing us to quantify and compare the alignment performance of UCOOT and COOT. As done previously \cite{Pamona, Demetci22, Liu2019}, we quantify the cell alignment performance by calculating the fraction of cells closer than the true match (FOSCTTM) of each cell in the dataset and averaging it across all cells. This metric quantifies alignment error, so lower values are more desirable. The feature alignments are measured by calculating the accuracy of correct matching. The results are presented after hyperparameter tuning both methods with similar grid size per hyperparameter (see Experimental Details in Appendix).


\textbf{Balanced Scenario.} First, we select and align the same number of samples and features across the two datasets. For this, we subset the gene expression domain with the ten genes that match to the ten antibodies they express. Original data contains the same number of cells across domains since both domains are simultaneously measured in the same single-cells. We observe that both UCOOT and COOT can correctly align features (Figure~\ref{fig:multiomics}~(a)) and the cells (Appendix Figure S1(a)) across the two measurements. However, UCOOT gives better performance, as demonstrated by a lower FOCSTTM score (0.0062 vs 0.0127) for cells. Both COOT and UCOOT recover the diagonal for matching features (100\% accuracy), but UCOOT recovers the exact matching, likely due to its robustness to noise, whereas COOT assigns weights to other features as well. 

\textbf{Unbalanced Scenarios.} Next, we perform alignment with an unequal number of features. This setting is more likely to occur for real-world single-cell datasets as different features are measured. In the first simple scenario, we align the ten antibodies with only a subset (five) of their matching genes. As visualized in Figure~\ref{fig:multiomics}~(b), COOT struggles to find the correct feature alignments (60\% accuracy), which would lie in the diagonal of the highlighted square (dashed lines). However, the relaxation of the mass conservation constraint in UCOOT allows it to shrink the mass of antibodies that lack matches in the gene expression domain, leading do higher accuracy (100\% accuracy). Next, we align the ten antibodies with the 50 most variable genes in the dataset, including their matching genes. This alignment task is the most realistic scenario, as single-cell multi-omics data consists of high-dimensional datasets with a different number of features for different measurements. 
Therefore, biologists focus their analyses on the reduced set of most variable features (e.g. genes). It is also the most computationally challenging case among all our experiments on this dataset. Hence, we provide sample-level supervision to both methods by giving a cost penalization matrix based on the correct sample alignments to the sample alignment computation. We see in Figure ~\ref{fig:multiomics}(c) that in comparison to COOT (50\% accuracy), UCOOT recovers more of the correct feature alignments (70\% accuracy), and yields fewer redundant alignments  (for more detail, see Experimental Details in Appendix). Note that UCOOT avoids incorrect mappings by locally shrinking the mass of the features or samples that lack correspondences. This avoids subsequent incorrect downstream analysis of the integration results. This property can also help users to discover rare cell types by observing the extent of mass relaxation per cell or prune uninformative features in the single-cell datasets.


Lastly, we also consider the case of unequal number of samples across the two measurements. This case is common in real world single-cell multi-omics datasets that are not simultaneously measured. Demetci \textit{et al.} \cite{Demetci22-2} have shown that single-cell alignment methods that do not account for this mismatch yield poor alignment results. Therefore, we downsample the number of cells in one of the domains by 25\% and perform alignment with the full set of cells in the other domain (Appendix Figure S1(b \& c). We compute the FOSCTTM score for all cells that have a true match in the dataset and report the average values. UCOOT continues to yield a low FOSCTTM score (0.0081 compared to 0.0062 in the balanced scenario), while COOT shows a larger drop in performance (0.1342 compared to 0.0127 in the balanced scenario).




\section{Discussion and conclusion}
\label{sec:conclusion}

In this work, we present an extension of COOT called unbalanced COOT, where the hard constraint on the marginal distributions is replaced by a soft control via the KL divergence. The resulting problem not only benefits from the flexibility of COOT but also enjoys the provable robustness property under the presence of outliers, which is not the case for COOT. The experimental results confirm our findings, yielding a very competitive performance in the unsupervised HDA task, as well as meaningful feature couplings for the single-cell multi-omics alignment. Also, while UCOOT introduces additional hyper-parameters, domain knowledge can help narrow down the range of feasible values, thus reducing the time and computational cost of the tuning process. Further investigation should be carried out to fully understand and assess the observed efficiency of UCOOT in real-world applications,
and also explore the possibilities of UCOOT in more diverse applicative settings, including its use as a loss in deep learning architectures. 
Lastly, from a theoretical perspective, statistical properties such as sample complexity or stability analysis are needed to better understand the intricate relations between the two sample and feature couplings.  

\section*{Acknowledgement}

The authors thank to Tuan Binh Nguyen for the fruitful discussion and invaluable suggestions. This work is funded by the projects OTTOPIA ANR-20-CHIA-0030, 3IA Côte d'Azur Investments ANR-19-P3IA-0002 of the French National Research Agency (ANR), the 3rd Programme d’Investissements d’Avenir ANR-18-EUR-0006-02, the Chair "Challenging Technology for Responsible Energy" led by l’X – Ecole Polytechnique and the Fondation de l’Ecole Polytechnique, sponsored by TOTAL, and the Chair "Business Analytics for Future Banking" sponsored by NATIXIS. This research is produced within the framework of Energy4Climate Interdisciplinary Center (E4C) of IP Paris and Ecole des Ponts ParisTech. Pinar Demetci's and Ritambhara Singh's contribution was funded by R35~HG011939.

\bibliography{aaai23}

\onecolumn

\appendix

\renewcommand{\thesection}{S\arabic{section}}
\renewcommand{\thefigure}{S\arabic{figure}}
\renewcommand{\thetable}{S\arabic{table}}
\setcounter{section}{0}
\setcounter{figure}{0}
\setcounter{table}{0}

\appendix
\setlength{\tabcolsep}{0.3em}
{\renewcommand{\arraystretch}{1}}

\section{Appendix}


\section{Additional concepts and notations}
Denote $\mathcal C_b(\mathcal X)$ the space of bounded continuous functions on $\mathcal X$. 
Given a Borel measurable map $T: \mathcal X \to \mathcal Y$ and a measure $\mu \in \mathcal{M}^+(\mathcal X)$, 
we define the push-forward (or image) measure $T_{\#}\mu \in \mathcal{M}^+(\mathcal Y)$ is the one which satisfies: 
for every $\phi \in \mathcal C_b(\mathcal Y), \int_{\mathcal X} (\phi \circ T) \; d\mu = \int_{\mathcal Y} \phi \; dT_{\#}\mu$.
For example, the marginal distributions of a measure on $\mathcal X \times \mathcal Y$ are the push-forward measures induced by the canonical projections:
$P_{\mathcal X}(x,y) = x$ and $P_{\mathcal Y}(x,y) = y$. Given two probability measures 
$(\mu, \nu) \in \mathcal{M}^+_1(\mathcal X) \times \mathcal{M}^+_1(\mathcal Y)$, 
denote $U(\mu, \nu) = \{ \pi \in \mathcal M^+_1(\mathcal X \times \mathcal Y): 
\pi_{\#1} = \mu, \pi_{\#2} = \nu\}$ the set of admissible transport plans.

Given an entropy function $\varphi : \mathbb R_{> 0} \to [0, \infty]$ 
(i.e. it is convex, positive and lower semi-continuous such that $\varphi(1) = 0$), we define the recession constant $\varphi_{\infty}' \in \mathbb R \cup \{ \infty \}$ as
$\varphi_{\infty}' = \lim_{x \to \infty} \frac{\varphi(x)}{x}$. The Csiszár divergence (or $\varphi$-divergence) \cite{Csiszar63} between two 
measures $\mu$ and $\nu$ in a certain space $\mathcal M^+(\mathcal S)$ is defined as
$D_{\varphi}(\mu \vert \nu) = \int_\mathcal S \varphi \Big( \frac{d\mu}{d\nu} \Big) d\nu + 
\varphi_{\infty}' \int_\mathcal S d\mu^{\perp}$,
where, by Lebesgue decomposition, we have $\mu = \frac{d\mu}{d\nu} \nu + \mu^{\perp}$. For example,
\begin{itemize}
	\item If $\varphi(x) = x \log x -x + 1$, then $D_{\varphi}$ is the Kullback-Leibler (KL) divergence.

	\item If $\varphi(x)$ is equal to $0$ if $x=1$ and $+\infty$ otherwise, then $D_{\varphi}$ is the indicator divergence $\iota_{=}$.
\end{itemize}
For later convenience, we define the function $\vert \xi_1 - \xi_2 \vert^p: (\mathcal X_1^s \times \mathcal X_2^s) \times (\mathcal X_1^f \times \mathcal X_2^f) \to \mathbb R_{\geq 0}$ by
\begin{equation*}
    \vert \xi_1 - \xi_2 \vert^p \big((x_1^s, x_2^s), (x_1^f, x_2^f)\big) :=  \vert \xi_1(x_1^s, x_1^f) - \xi_2(x_2^s, x_2^f) \vert^p,
\end{equation*}
and write the objective function of generalised COOT as
\begin{equation*}
    F_{\lambda}(\pi^s, \pi^f) = \iint |\xi_1 - \xi_2|^p \mathrm d\pi^s \mathrm d \pi^f + \sum_{k=1}^2\lambda_k D_k(\pi^s_{\#k} \otimes \pi^f_{\#k} \vert \mu^s_k \otimes \mu^f_k).
\end{equation*}
The generalized COOT now reads compactly as
\begin{equation} \label{eq:ucoot_copy}
  \inf_{\substack{\pi^s \in \mathcal M^+(\mathcal X_1^s \times \mathcal X_2^s) \\
  \pi^f \in \mathcal M^+(\mathcal X_1^f \times \mathcal X_2^f) \\ m(\pi^s) = m(\pi^f)}} F_{\lambda}(\pi^s, \pi^f)
\end{equation}
\section{Proofs related to UCOOT}

\subsection{UCOOT and its properties}
\begin{claim}
  When $D_k = \iota_{=}$ and $\mu_k^s, \mu_k^f$ are probability measures, for $k=1,2$, then we recover COOT from generalized COOT.
\end{claim}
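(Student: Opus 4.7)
The plan is to show that under the stated conditions, the feasible set of the generalized COOT problem coincides with the feasible set of COOT, and that the objective reduces to the COOT objective on this set. First I would observe that since $D_k = \iota_=$ takes only the values $0$ and $+\infty$, the infimum in \eqref{eq:ucoot_copy} is effectively taken over pairs $(\pi^s, \pi^f)$ satisfying $m(\pi^s) = m(\pi^f)$ together with the two product-marginal equalities
\begin{equation*}
\pi^s_{\#k} \otimes \pi^f_{\#k} = \mu^s_k \otimes \mu^f_k, \qquad k = 1, 2,
\end{equation*}
since otherwise the penalty is $+\infty$ and the pair cannot be optimal (feasibility of $(\mu_1^s \otimes \mu_2^s,\, \mu_1^f \otimes \mu_2^f)$ guarantees the infimum is finite). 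On this feasible set, the penalty vanishes and the objective is exactly $\iint |\xi_1 - \xi_2|^p \mathrm d\pi^s \mathrm d\pi^f$, so the remaining task is to verify that the above product constraints, combined with $m(\pi^s) = m(\pi^f)$ and the assumption that all $\mu^a_k$ are probabilities, are equivalent to the COOT marginal constraints $\pi^s_{\#k} = \mu^s_k$ and $\pi^f_{\#k} = \mu^f_k$ for $k=1,2$.

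The key step — and really the only nontrivial one — is this equivalence. First I would take total masses in $\pi^s_{\#k} \otimes \pi^f_{\#k} = \mu^s_k \otimes \mu^f_k$ to get
\begin{equation*}
m(\pi^s)\, m(\pi^f) \;=\; m(\pi^s_{\#k})\, m(\pi^f_{\#k}) \;=\; m(\mu^s_k)\, m(\mu^f_k) \;=\; 1.
\end{equation*}
Combined with $m(\pi^s) = m(\pi^f)$, this forces $m(\pi^s) = m(\pi^f) = 1$, so in particular $m(\pi^f_{\#k}) = 1$. Then, for any Borel $A \subseteq \mathcal X^s_k$, evaluating both sides of the product equality on $A \times \mathcal X^f_k$ gives $\pi^s_{\#k}(A)\cdot m(\pi^f_{\#k}) = \mu^s_k(A)\cdot m(\mu^f_k)$, i.e. $\pi^s_{\#k}(A) = \mu^s_k(A)$. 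Symmetrically $\pi^f_{\#k} = \mu^f_k$. The converse direction is immediate by Fubini, so the constraint sets agree.

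Combining the two steps, the generalized COOT problem reduces to
\begin{equation*}
\inf_{\substack{\pi^s_{\#k} = \mu^s_k \\ \pi^f_{\#k} = \mu^f_k,\ k=1,2}} \iint |\xi_1 - \xi_2|^p \mathrm d\pi^s \mathrm d\pi^f,
\end{equation*}
which is exactly the definition of COOT between the sample-feature spaces $\mathbb X_1$ and $\mathbb X_2$ (the equal-mass condition $m(\pi^s) = m(\pi^f) = 1$ being automatic from the marginal constraints). The main obstacle is really only the factorization argument in the key step — everything else is bookkeeping — and it hinges crucially on the probability assumption, since without $m(\mu^f_k) = 1$ one could only conclude equality of the products, not of the marginals themselves.
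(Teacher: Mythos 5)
Your proposal is correct and follows essentially the same route as the paper's proof: replace the indicator penalties by the hard product-marginal constraints, take total masses together with $m(\pi^s)=m(\pi^f)$ to force both masses to equal one, and then integrate out one factor of the product equality to recover the individual marginal constraints $\pi^s_{\#k}=\mu^s_k$, $\pi^f_{\#k}=\mu^f_k$. Your evaluation on sets $A\times\mathcal X^f_k$ is just a slightly more explicit rendering of the paper's partial-integration step, so the two arguments coincide in substance.
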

\begin{proof}
  Under the above assumptions, the generalized COOT problem becomes
  \begin{equation*}
    \begin{split}
      \inf_{\substack{\pi^s \in \mathcal M^+(\mathcal X_1^s \times \mathcal X_2^s) \\ 
      \pi^f \in \mathcal M^+(\mathcal X_1^f \times \mathcal X_2^f)}} 
      &\iint |\xi_1 - \xi_2|^p \mathrm d\pi^s \mathrm d \pi^f \\
      \text{subject to } &\pi^s_{\#1} \otimes \pi^f_{\#1} = \mu_1^s \otimes \mu_1^f \text{ (C1) } \\
      & \pi^s_{\#2} \otimes \pi^f_{\#2} = \mu_2^s \otimes \mu_2^f \text{ (C2) } \\
      & m(\pi^s) = m(\pi^f) \text{ (C3) }.
    \end{split}
  \end{equation*}
  As $m(\pi) = m(\pi_{\#1}) = m(\pi_{\#2})$,
  for any measure $\pi$, and $\mu_k^s, \mu_k^f$ are probability measures, for $k=1,2$, 
  one has $m(\pi^s) m(\pi^f) = 1$, thus $m(\pi^s) = m(\pi^f) = 1$.
  Now, the constraint C1 implies that $\int_{\mathcal X_1^s} \mathrm d\pi^s_{\#1} \mathrm d \pi^f_{\#1} = \int_{\mathcal X_1^s} \mathrm d\mu_1^s \mathrm d\mu_1^f$. Thus, $\pi^f_{\#1} = \mu_1^f$. Similarly, we have $\pi^s_{\#k} = \mu_k^s$ and $\pi^f_{\#k} = \mu_k^f$, for any $k=1,2$. We conclude that $\pi^f \in U(\mu_1^f, \mu_2^f)$ and $\pi^s \in U(\mu_1^s, \mu_2^s)$, 
  and we obtain the COOT problem. \qed
\end{proof}

\begin{proposition}
    \label{eq:ucoot_existence_copy}
  (Existence of minimizer) Denote $\mathcal S := (\mathcal X_1^s \times \mathcal X_2^s) \times (\mathcal X_1^f \times \mathcal X_2^f)$. The problem \ref{eq:ucoot_copy} admits a minimizer if at least one of the following conditions hold:
  \begin{enumerate}
    \item The entropy functions $\phi_1$ and $\phi_2$ are superlinear, i.e. $(\phi_1)'_{\infty} = (\phi_2)'_{\infty} = \infty$.
    \item The function $\vert c_X - c_Y \vert^p$ has compact sublevels in $\mathcal S$ and 
    $\inf_{\mathcal S} \vert \xi_1 - \xi_2 \vert^p + \lambda_1 (\phi_1)'_{\infty} + \lambda_2 (\phi_2)'_{\infty} > 0$.
  \end{enumerate}
\end{proposition}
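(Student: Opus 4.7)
The plan is to apply the direct method of the calculus of variations: exhibit a feasible pair of finite energy, extract a weakly convergent subsequence from a minimizing sequence, and close the argument via lower semicontinuity of each summand of $F_\lambda$. Feasibility is immediate by taking $(\pi^s_0, \pi^f_0) = (\mu^s_1 \otimes \mu^s_2,\, \mu^f_1 \otimes \mu^f_2)$, appropriately rescaled so that $m(\pi^s_0) = m(\pi^f_0)$: both $\phi$-divergences vanish, and the transport cost is finite because $\xi_1, \xi_2 \in L^p$ against finite reference measures on compact supports. Hence $\inf F_\lambda < \infty$, and one may pick a minimizing sequence $(\pi^s_n, \pi^f_n)$.

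The first quantitative step is to bound the masses $m(\pi^s_n) = m(\pi^f_n)$ uniformly in $n$. Under condition (1), Jensen's inequality gives $D_{\phi_k}(\sigma \vert \tau) \geq m(\tau)\, \phi_k(m(\sigma)/m(\tau))$ whenever $\sigma \ll \tau$, and the superlinearity of $\phi_k$ then forces the product $m(\pi^s_n)\, m(\pi^f_n) = m(\pi^s_n)^2$ to remain bounded. Under condition (2), the Lebesgue decomposition used in the definition of $D_{\phi_k}$ brings in the recession constants $(\phi_k)'_\infty$ multiplied by the singular masses; combined with the compact sublevel assumption on $|\xi_1 - \xi_2|^p$ and the positivity of $\inf |\xi_1 - \xi_2|^p + \lambda_1 (\phi_1)'_\infty + \lambda_2 (\phi_2)'_\infty$, any escape of mass to infinity contributes strictly positively to $F_\lambda$, again precluding unbounded minimizing sequences.

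With the masses uniformly bounded on compact supports, Prokhorov's theorem produces subsequential weak limits $\pi^s_n \to \pi^s_\star$ and $\pi^f_n \to \pi^f_\star$. The constraint $m(\pi^s_n) = m(\pi^f_n)$ passes to the limit by weak continuity of total mass against the constant test function $1$ on compact spaces. The $\phi$-divergence terms are jointly weakly lower semicontinuous in both of their arguments, a classical property of Csisz\'ar divergences; combined with the weak continuity of marginalization and tensorization on compact spaces with bounded masses, this yields the desired liminf inequality for the penalty.

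The main obstacle is the cost term $\iint |\xi_1 - \xi_2|^p\, d\pi^s\, d\pi^f$, since the integrand is only $L^p$ rather than continuous. The natural way forward is to view it as an integral against $\pi^s \otimes \pi^f$ on $\mathcal S$, noting that $\pi^s_n \otimes \pi^f_n \to \pi^s_\star \otimes \pi^f_\star$ weakly because both factors converge weakly with bounded total mass. Along the minimizing sequence, finiteness of the divergence terms forces absolute continuity of the tensor marginals with respect to the reference product measures, together with entropic control of the densities, which provides the uniform integrability one needs. One can then approximate $|\xi_1 - \xi_2|^p$ from below by bounded continuous functions via Vitali--Carath\'eodory and conclude the lower semicontinuity by monotone convergence. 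Assembling the three l.s.c. bounds gives $F_\lambda(\pi^s_\star, \pi^f_\star) \leq \liminf_n F_\lambda(\pi^s_n, \pi^f_n)$, so $(\pi^s_\star, \pi^f_\star)$ attains the infimum.
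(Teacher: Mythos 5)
Your overall strategy — exhibit a finite-energy feasible pair, get a uniform bound on the common mass via a Jensen-type estimate on the $\varphi$-divergences, extract weak limits of $\pi^s_n$ and $\pi^f_n$ on the compact spaces, pass to the limit in the product, and use lower semicontinuity of the Csisz\'ar terms — is essentially the paper's proof (there the coercivity is written as $L(\pi)\ge m(\pi)\bigl[\inf_{\mathcal S}|\xi_1-\xi_2|^p+\sum_k\lambda_k\tfrac{m(\mu_k)}{m(\pi)}\phi_k\bigl(\tfrac{m(\pi)}{m(\mu_k)}\bigr)\bigr]$, which covers both conditions at once; note that on the compact $\mathcal S$ the relevant failure mode is blow-up of the total mass, not escape of mass to infinity, and the ``compact sublevels'' hypothesis is automatic). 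The genuine gap is in your treatment of the cost term. First, the claim that finiteness of the divergence terms forces absolute continuity of $\pi^s_{n\#k}\otimes\pi^f_{n\#k}$ with respect to $\mu^s_k\otimes\mu^f_k$ only holds for superlinear entropies; condition (2) expressly allows finite recession constants (e.g.\ total variation), in which case a singular part incurs only the finite penalty $(\phi_k)'_\infty\, m(\mu^\perp)$ and no density control is available. Second, and more importantly, absolute continuity of the two tensor marginals says nothing about the plan $\pi^s_n\otimes\pi^f_n$ itself relative to the reference product (a coupling concentrated on a diagonal has absolutely continuous marginals yet is singular with respect to the product), so the uniform integrability of $|\xi_1-\xi_2|^p$ against the varying plans that you invoke is unfounded.

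The Vitali--Carath\'eodory step does not close the argument either: that theorem provides, for a \emph{fixed} measure, an upper semicontinuous minorant close in $L^1$ of that measure; upper semicontinuous minorants interact with weak convergence in the wrong direction (Portmanteau gives a $\limsup$ inequality), and the supremum of continuous functions lying below a merely measurable cost is its lower semicontinuous envelope, which can be strictly smaller than the cost on a set charged by the limit plan. If $|\xi_1-\xi_2|^p$ is only $L^p$, the functional $\pi\mapsto\int|\xi_1-\xi_2|^p\,d\pi$ is genuinely not weakly lower semicontinuous, and no marginal control can repair this. The paper sidesteps the issue because in its setting $\xi_1,\xi_2$ are continuous on compact supports (cf.\ Assumption 1, and the results of Liero et al.\ and S\'ejourn\'e et al.\ it adapts, which assume a lower semicontinuous cost), so $|\xi_1-\xi_2|^p$ is continuous and bounded and the $\liminf$ inequality for the cost is immediate once $\pi^s_n\otimes\pi^f_n\rightharpoonup\pi^s_\star\otimes\pi^f_\star$. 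To fix your proof, replace the uniform-integrability/Vitali--Carath\'eodory paragraph by the (implicit) hypothesis that the cost is lower semicontinuous — e.g.\ continuity of the interactions — after which your argument coincides with the paper's and is complete.
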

\begin{proof}
  We adapt the proof of Theorem 3.3 in \citep{Liero18} and of Proposition 3 in \citep{Sejourne20}. 
  For convenience, we write $\mu_1 = \mu_1^s \otimes \mu_1^f$ and 
  $\mu_2 = \mu_2^s \otimes \mu_2^f$. For each pair $(\pi^s, \pi^f)$, denote 
  $\pi = \pi^s \otimes \pi^f$. 
  It can be shown that 
  $\pi_{\# k} := (P_{\mathcal X_k^s \times \mathcal X_k^f})_{\#} \pi 
  = (P_{\mathcal X_k^s})_{\#} \pi^s \otimes (P_{\mathcal X_k^f})_{\#} \pi^f = 
  \pi^s_{\# k} \otimes \pi^f_{\# k}$, for $k=1,2$. Indeed, for any function $\phi \in \mathcal C_b(\mathcal X_k^s \times \mathcal X_k^f)$, we have
    \begin{equation*}
      \begin{split}
        \int_{\mathcal X_k^s \times \mathcal X_k^f} \phi \; \mathrm d (P_{\mathcal X_k^s \mathcal X_k^f})_{\#} \pi 
        &= \int_{\mathcal S} (\phi \circ P_{\mathcal X_k^s \mathcal X_k^f}) \; \mathrm d\pi \\
        &= \int_{\mathcal S} \phi(x_k^s, x_k^f) 
        \; \mathrm d \pi^s(x_1^s, x_2^s) \; \mathrm d \pi^f(x_1^f, x_2^f) \\  
        &= \int_{\mathcal X_k^s \times \mathcal X_k^f} \phi \; \mathrm d \pi^s_{\# k} \; 
        \mathrm d \pi^f_{\# k}.
      \end{split}
    \end{equation*}
  Thus, the problem \ref{eq:ucoot_copy} can be rewritten as
  \begin{equation*}
    \ucoot_{\lambda}(\mathbb X_1, \mathbb X_2) = 
    \inf_{\pi \in E_{uco}} \int_{\mathcal S} \vert \xi_1 - \xi_2 \vert^p 
    \mathrm d\pi + \sum_{k=1,2} \lambda_k D_{\phi_k}(\pi_{\# k} \vert \mu_k),
  \end{equation*}
  where
  \begin{equation*}
    E_{uco} = \{ \pi \in \mathcal M^+(\mathcal S) \vert \pi = \pi^s \otimes \pi^f, 
    \pi^s \in \mathcal M^+(\mathcal X_1^s \times \mathcal X_2^s), 
    \pi^f \in \mathcal M^+(\mathcal X_1^f \times \mathcal X_2^f) \}.
  \end{equation*}
  Define 
  \begin{equation*}
    L(\pi):= \int_{\mathcal S} \vert \xi_1 - \xi_2 \vert^p \mathrm d \pi + 
    \sum_{k=1,2} \lambda_k D_{\phi_k}(\pi_{\# k} \vert \mu_k).
  \end{equation*}
  By Jensen's inequality, we have
  \begin{equation*}
    \begin{split}
      L(\pi) &\geq m(\pi) \inf_{\mathcal S} \vert \xi_1 - \xi_2 \vert^p + 
      \sum_{k=1,2} \lambda_k m(\mu_k) \phi_k \Big( \frac{m(\pi_{\# k})}{m(\mu_k)} \Big) \\
      &= m(\pi) \bigg[ \inf_{\mathcal S} \vert \xi_1 - \xi_2 \vert^p + 
      \sum_{k=1,2} \lambda_k \frac{m(\mu_k)}{m(\pi)} \phi_k \Big( \frac{m(\pi)}{m(\mu_k)} \Big) \bigg],
    \end{split}
  \end{equation*}
  where, in the last equality, we use the relation $m(\pi) = m(\pi_{\# k})$, for $k=1,2$. 
  It follows from the assumption that $L$ is coercive, i.e. $L(\pi) \to \infty$ when $m(\pi) \to \infty$.
  
  Clearly $\inf_{E_{uco}} L < \infty$ because $L\big( (\mu_1^s \otimes \mu_2^s) \otimes (\mu_1^f \otimes \mu_2^f) \big) < \infty$.
  Let $(\pi_n)_n \subset {E_{uco}}$ be a minimizing sequence, i.e. $L(\pi_n) \to \inf_{E_{uco}} L$. 
  Such sequence is necessarily bounded (otherwise, there exists a subsequence $(\pi_{n_k})_{n_k}$ 
  with $m(\pi_{n_k}) \to \infty$ and the coercivity of $L$ implies $L(\pi_{n_k}) \to \infty$, 
  which is absurd). Suppose $m(\pi_{n}) \leq M$, for some $M > 0$. By Tychonoff's theorem, 
  as $\mathcal X_k^s$ and $\mathcal X_k^f$ are compact spaces, 
  so is the product space $\mathcal S$. Thus, by Banach-Alaoglu theorem, 
  the ball $B_M = \{ \pi \in \mathcal M^+(\mathcal S): m(\pi) \leq M \}$ 
  is weakly compact in $\mathcal M^+(\mathcal S)$. 
  
  Consider the set $\overline{E}_{uco} = E_{uco} \cap B_M$, then clearly $(\pi_n)_n \subset \overline{E}_{uco}$. We will show that 
  there exists a converging subsequence of $(\pi_n)_n$, whose limit is in $\overline{E}_{uco}$, 
  thus $\overline{E}_{uco}$ is weakly compact. Indeed, by definition of $E_{uco}$, 
  there exist two sequences $(\pi_n^s)_n$ and $(\pi_n^f)_n$ such that 
  $\pi_n = \pi_n^s \otimes \pi_n^f$. 
  We can assume furthermore that $m(\pi_n^s) = m(\pi_n^f) = \sqrt{m(\pi_n)} \leq \sqrt M$. 
  As $m(\pi_n^s)$ and $m(\pi_n^f)$ are bounded, by reapplying Banach-Alaoglu theorem, 
  one can extract two converging subsequences (after reindexing) 
  $\pi_n^s \rightharpoonup \pi^s \in \mathcal M^+(\mathcal X_1^s \times \mathcal X_2^s)$ and 
  $\pi_n^f \rightharpoonup \pi^f \in \mathcal M^+(\mathcal X_1^f \times \mathcal X_2^f)$, 
  with $m(\pi^s) = m(\pi^f) \leq \sqrt{M}$. 
  An immediate extension of Theorem 2.8 in \citep{Billingsley99} to the convergence of the products of bounded positive measures implies 
  $\pi_n^s \otimes \pi_n^f \rightharpoonup \pi^s \otimes \pi^f \in \overline{E}_{uco}$. 
  
  Now, the lower semicontinuity of $L$ implies that $\inf_{E_{uco}} L \geq L(\pi^s \otimes \pi^f)$, 
  thus $L(\pi^s \otimes \pi^f) = \inf_{E_{uco}} L$ and $(\pi^s, \pi^f)$ 
  is a solution of the problem \ref{eq:ucoot_copy}. \qed
\end{proof}

\begin{claim}
Suppose that $\mathbb X_1$ and $\mathbb X_2$ are two finite sample-feature spaces such that $(\mathcal X^s_1, \mathcal X^s_2)$ and $(\mathcal X^f_1, \mathcal X^f_2)$ have the same cardinality and are equipped with the uniform measures $\mu_1^s = \mu_2^s$, $\mu_1^f = \mu_2^f$. Then $\ucoot_{\lambda}(\mathbb X_1, \mathbb X_2) = 0$ if and only if there exist perfect alignments between rows (samples) and between columns (features) of the interaction matrices $\xi_1$ and $\xi_2$.
\end{claim}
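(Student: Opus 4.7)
The plan is to prove the two implications separately. The $(\Leftarrow)$ direction will be a direct construction, while the $(\Rightarrow)$ direction reduces, via rigidity of $\kl$, to a Birkhoff--von Neumann argument that extracts individual permutations from doubly stochastic couplings. Throughout, I write $n = |\mathcal X_1^s| = |\mathcal X_2^s|$ and $d = |\mathcal X_1^f| = |\mathcal X_2^f|$.

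For the $(\Leftarrow)$ direction, I would assume bijections $\sigma: \mathcal X_1^s \to \mathcal X_2^s$ and $\tau: \mathcal X_1^f \to \mathcal X_2^f$ realizing $\xi_1(i,k) = \xi_2(\sigma(i), \tau(k))$ for all $(i,k)$, and then exhibit the candidate couplings $\pi^s = \frac{1}{n}\sum_i \delta_{(i, \sigma(i))}$ and $\pi^f = \frac{1}{d}\sum_k \delta_{(k, \tau(k))}$. One checks immediately that $m(\pi^s) = m(\pi^f) = 1$, that the marginals match $\mu_k^s$ and $\mu_k^f$ exactly so that both $\kl$ penalties vanish, and that the cost integrand $|\xi_1 - \xi_2|^p$ is identically zero on the support of $\pi^s \otimes \pi^f$ by the alignment hypothesis. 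Hence $F_\lambda(\pi^s, \pi^f) = 0$, and nonnegativity of $F_\lambda$ yields $\ucoot_\lambda(\mathbb X_1, \mathbb X_2) = 0$.

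For the $(\Rightarrow)$ direction, I would first invoke Proposition~\ref{prop:existence} to obtain a minimizer $(\pi^s, \pi^f)$ with $F_\lambda(\pi^s, \pi^f) = 0$. Since the three summands of $F_\lambda$ are nonnegative they must each vanish, and, because the reference measures $\mu_k^s \otimes \mu_k^f$ have full support and $\kl(\rho | \eta) = 0$ iff $\rho = \eta$, one obtains $\pi^s_{\#k} \otimes \pi^f_{\#k} = \mu_k^s \otimes \mu_k^f$ for $k=1,2$. Matching total masses then gives $m(\pi^s)\,m(\pi^f) = 1$, and combined with the equal-mass constraint $m(\pi^s) = m(\pi^f)$ this forces $m(\pi^s) = m(\pi^f) = 1$. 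Projecting each product equality onto its two factors then identifies $\pi^s_{\#k} = \mu_k^s$ and $\pi^f_{\#k} = \mu_k^f$, so $n\pi^s$ and $d\pi^f$ are doubly stochastic matrices.

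The final step is to apply Birkhoff--von Neumann to write $\pi^s = \frac{1}{n}\sum_a \alpha_a P_{\sigma_a}$ and $\pi^f = \frac{1}{d}\sum_b \beta_b P_{\tau_b}$, with $\alpha_a, \beta_b \geq 0$ each summing to one. Picking any $(a,b)$ with $\alpha_a \beta_b > 0$ (which exists since the weights sum to one), one has $\pi^s_{i, \sigma_a(i)} \pi^f_{k, \tau_b(k)} \geq \alpha_a \beta_b / (nd) > 0$ for every $(i,k)$, so the vanishing of the cost integral forces $\xi_1(i,k) = \xi_2(\sigma_a(i), \tau_b(k))$, and $(\sigma_a, \tau_b)$ is the claimed perfect alignment. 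The main obstacle I anticipate is precisely this last step: a priori one might worry that only the convex combination of the permutations produces zero cost, with no single pair doing so. This is ruled out by the elementwise lower bound $\pi^s_{i, \sigma_a(i)} \geq \alpha_a / n$, valid regardless of the other permutations in the decomposition, which ensures that every $(\sigma_a, \tau_b)$ with positive weight is forced to be a perfect alignment.
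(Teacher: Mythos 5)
Your proof is correct, and its first half coincides with the paper's: both arguments show that a zero value of $F_\lambda$ forces the KL terms to vanish, hence $\pi^s_{\#k}\otimes\pi^f_{\#k}=\mu^s_k\otimes\mu^f_k$, then use $m(\pi^s)m(\pi^f)=1$ together with the equal-mass constraint to get $m(\pi^s)=m(\pi^f)=1$ and, by projecting, $\pi^s\in U(\mu_1^s,\mu_2^s)$, $\pi^f\in U(\mu_1^f,\mu_2^f)$. Where you diverge is the final step: the paper stops there, concluding $\ucoot_\lambda=0$ iff $\coot=0$, and then cites Proposition~1 of Redko et al.\ (2020) for the equivalence between $\coot=0$ and the existence of perfect row/column alignments; you instead prove that equivalence yourself, with an explicit permutation-coupling construction for sufficiency and a Birkhoff--von Neumann extraction for necessity, where the elementwise bound $\pi^s_{i,\sigma_a(i)}\geq\alpha_a/n$ correctly rules out the worry that only the mixture, and no single permutation pair, achieves zero cost. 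Your route is self-contained and makes the combinatorial mechanism visible, at the price of redoing work the paper delegates to a known result; the paper's reduction is shorter but opaque about why zero COOT yields permutations. One small point: you tacitly normalize the uniform measures to probability measures (weights $1/n$ and $1/d$), which is the same harmless reduction the paper makes explicitly with a ``without loss of generality''; if the uniform measures had total mass different from one, your candidate couplings in the $(\Leftarrow)$ direction would simply need to be rescaled so that the product-marginal and equal-mass conditions still hold.
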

\begin{proof}
    Without loss of generality, we can assume that $\mu_k^s$ and $\mu_k^f$ are discrete uniform probability distributions, for $k=1,2$. By Proposition 1 in \citep{Redko20}, under the assumptions on $\mathbb X_1$ and $\mathbb X_2$, we have  $\coot(\mathbb X_1, \mathbb X_2) = 0$ if and only if there exist perfect alignments between rows (samples) and between columns (features) of the interaction matrices $\xi_1$ and $\xi_2$. So, it is enough to prove that $\ucoot_{\lambda}(\mathbb X_1, \mathbb X_2) = 0$ if and only if $\coot(\mathbb X_1, \mathbb X_2) = 0$.
    
    Let $(\pi^s, \pi^f)$ be a pair of equal-mass couplings such that $\ucoot_{\lambda}(\mathbb X_1, \mathbb X_2) = 0$. It follows that $\pi^s_{\#k} \otimes \pi^f_{\#k} = \mu_k^s \otimes \mu_k^f$, for $k=1,2$. Consequently, $m(\pi^s) m(\pi^f) = m(\mu_1^s) m(\mu_1^f) = 1$, so $m(\pi^s) = m(\pi^f) = 1$. Now, we have
    $\int_{\mathcal X_k^s} \mathrm d \pi^s_{\#k} \; \mathrm d \pi^f_{\#k} = \int_{\mathcal X_k^s} \mathrm d\mu_k^s \; \mathrm d\mu_k^f$, or equivalently, $\pi^f_{\#k} = \mu_k^f$. Similarly, $\pi^s_{\#k} = \mu_k^s$, meaning that 
  $\pi^s \in U(\mu_1^s, \mu_2^s)$ and $\pi^f \in U(\mu_1^f, \mu_2^f)$. Thus,
  $\coot(\mathbb X_1, \mathbb X_2) = \ucoot_{\lambda}(\mathbb X_1, \mathbb X_2) = 0$.

  For the other direction, suppose that $\coot(\mathbb X_1, \mathbb X_2) = 0$. Let $(\pi^s, \pi^f)$ be a pair of couplings such 
  that $\coot(\mathbb X, \mathbb Y) = 0$. As $\pi^s \in U(\mu_1^s, \mu_2^s)$ and $\pi^f \in U(\mu_1^f, \mu_2^f)$, one has 
  $\coot(\mathbb X_1, \mathbb X_2) = F_{\lambda}(\pi^s, \pi^f) \geq \ucoot_{\lambda}(\mathbb X_1, \mathbb X_2) \geq 0$, 
  for every $\lambda_1, \lambda_2 > 0$. So, $\ucoot_{\lambda}(\mathbb X_1, \mathbb X_2) = 0$. \qed
\end{proof}

\subsection{Robustness of UCOOT and sensitivity of COOT}

First, we recall our assumptions.
\begin{assumption}
\label{assump:robust_copy}
Consider two sample-feature spaces $\bbX_k = ((\mathcal X^s_k, \mu^s_k), (\mathcal X^f_k, \mu^f_k), \xi_k)$, for $k=1,2$. Let $\varepsilon^s$ (resp. $\varepsilon^f$) be a probability measure with compact support $\cO^s$ (resp. $\cO^f$). For $a \in \{s, f\}$, define the noisy distribution $\widetilde{\mu}^a = \alpha_a \mu^a + (1-\alpha_a) \varepsilon^a$, where $\alpha_a \in [0,1]$. We assume that $\xi_1$ is defined on $(\mathcal X^s_1 \cup \cO^s) \times (\mathcal X^f_1 \cup \cO^f)$ and that $\xi_1, \xi_2$ are continuous on their supports. We denote the contaminated sample-feature space by $\widetilde{\mathbb X_1} = ((\mathcal X^s_1 \cup \cO^s, \widetilde{\mu}^s_1), (\cX^f_1 \cup \cO^f, \widetilde{\mu}^f_1), \xi_1)$. Finally, we define some useful minimal and maximal costs:
  \[
  \begin{cases}
  \Delta_{0} \eqdef& \min_{
  \substack{
       x_1^s \in \cO^s, x_1^f \in \cO^f  \\
       x_2^s \in \cX_2^s, x_2^f \in \cX_2^f
  }}\quad |\xi_1(x_1^s, x_1^f) - \xi_2(x_2^s, x_2^f)|^p \\
  \Delta_{\infty} \eqdef& \max_{
  \substack{
  x_1^s \in \cX_1^s \cup \cO^s, x_1^f \in \cX_1^f \cup\cO^f \\
  x_2^s \in \cX_2^s, x_2^f \in \cX_2^f
  }} \quad|\xi_1(x_1^s, x_1^f) - \xi_2(x_2^s, x_2^f)|^p \enspace.
  \end{cases}
\]
\end{assumption}
For convenience, we write $C \eqdef \vert \xi_1 - \xi_2 \vert^p$ and $\widetilde{\mathcal S} := (\mathcal X^s_1 \cup \cO^s) \times \mathcal X_2^s \times (\mathcal X^f_1 \cup \cO^f) \times \mathcal X_2^f$.
\begin{proposition} (COOT is sensitive to outliers)
Consider $\widetilde{\mathds X_1}, \mathds X_2$ as defined in Assumption \ref{assump:robust_copy}. Then:
 \label{prop:coot-not-robust_copy}
\begin{equation*}
 \label{eq:coot-not-robust}
    \coot(\widetilde{\mathds X_1}, \mathds X_2) \geq (1 - \alpha_s)(1-\alpha_f)\Delta_0.
\end{equation*}
\end{proposition}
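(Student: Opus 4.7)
The statement concerns balanced COOT, where the marginal constraints are hard: for any admissible pair $(\pi^s, \pi^f)$, Claim~1 in the appendix gives $\pi^s_{\#1} = \widetilde{\mu}^s_1$, $\pi^f_{\#1} = \widetilde{\mu}^f_1$, $\pi^s_{\#2} = \mu^s_2$, $\pi^f_{\#2} = \mu^f_2$. My strategy is to isolate the portion of mass that is forced to originate from the outlier supports $\mathcal{O}^s$ and $\mathcal{O}^f$ and to bound the corresponding contribution to the transport cost from below by $\Delta_0$.

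First, I would lower bound the marginal masses on the outlier sets. From $\widetilde{\mu}^s_1 = \alpha_s \mu^s_1 + (1-\alpha_s)\varepsilon^s$ and the fact that $\varepsilon^s$ is a probability measure supported on $\mathcal{O}^s$, I get
\begin{equation*}
  \pi^s_{\#1}(\mathcal{O}^s) \;=\; \widetilde{\mu}^s_1(\mathcal{O}^s) \;\geq\; (1-\alpha_s)\,\varepsilon^s(\mathcal{O}^s) \;=\; 1-\alpha_s,
\end{equation*}
and, analogously, $\pi^f_{\#1}(\mathcal{O}^f) \geq 1-\alpha_f$. Hence the product marginal restricted to outliers satisfies $\pi^s\!\left(\mathcal{O}^s \times \mathcal{X}_2^s\right) \geq 1-\alpha_s$ and $\pi^f\!\left(\mathcal{O}^f \times \mathcal{X}_2^f\right) \geq 1-\alpha_f$.

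Second, I would restrict the COOT objective integral to the slice $\mathcal{O}^s \times \mathcal{X}_2^s \times \mathcal{O}^f \times \mathcal{X}_2^f$ and use the definition of $\Delta_0$ as the minimum of $|\xi_1-\xi_2|^p$ over exactly that slice. Non-negativity of the integrand off the slice plus the Fubini structure of the double integral yield
\begin{equation*}
  \iint |\xi_1 - \xi_2|^p \,\mathrm{d}\pi^s \,\mathrm{d}\pi^f
  \;\geq\; \Delta_0 \cdot \pi^s\!\left(\mathcal{O}^s \times \mathcal{X}_2^s\right) \cdot \pi^f\!\left(\mathcal{O}^f \times \mathcal{X}_2^f\right)
  \;\geq\; (1-\alpha_s)(1-\alpha_f)\,\Delta_0.
\end{equation*}
Taking the infimum over admissible $(\pi^s,\pi^f)$ gives the claim.

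There is essentially no hard step: the argument is entirely driven by the rigidity of the exact marginal constraints in COOT, which prevents any feasible plan from erasing the outlier mass. The only minor subtlety is to make sure that the lower bound on $\widetilde{\mu}^s_1(\mathcal{O}^s)$ remains valid even if $\mathcal{X}^s_1 \cap \mathcal{O}^s \neq \emptyset$; this is automatic since we only use one direction of the inequality and $\alpha_s\mu^s_1(\mathcal{O}^s) \geq 0$. No continuity or compactness properties of $\xi_1,\xi_2$ are needed beyond the measurability already assumed, and the bound is sharp in the limit where outliers are placed to realize $\Delta_0$.
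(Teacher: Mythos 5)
Your proof is correct and follows essentially the same route as the paper's: both restrict the COOT integral to the slice $\mathcal{O}^s \times \mathcal{X}_2^s \times \mathcal{O}^f \times \mathcal{X}_2^f$, bound the cost there below by $\Delta_0$, and use the exact marginal constraints to get $\pi^s_{\#1}(\mathcal{O}^s)\,\pi^f_{\#1}(\mathcal{O}^f) \geq (1-\alpha_s)(1-\alpha_f)$. Your write-up merely spells out the step the paper summarizes as ``the last inequality follows from the marginal constraints,'' which is a welcome but not substantive difference.
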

\begin{proof}
Consider a pair of feasible alignments $(\pi^s, \pi^f)$. Since $C$ is non-negative, taking the COOT integral over a smaller set leads to the lower bound:
  \begin{equation*}
      \begin{split}
          \int_{\widetilde{\cS}} C \mathrm d\pi^s\mathrm d\pi^f 
          &\geq \int_{\cO^s \times \cX_2^s \times \cO^f \times \cX_2^f} C \mathrm d\pi^s\mathrm d\pi^f \\
            &\geq  \Delta_0 \int_{\cO^s \times \cX_2^s \times \cO^f \times \cX_2^f}  \mathrm d\pi^s\mathrm d\pi^f \\
            &= \Delta_0 \int_{\cO^s\times \cO^f}  \mathrm d\pi^s_{\#1} \mathrm d\pi^f_{\#1} \\
            &\geq (1 - \alpha_s)(1 -\alpha_f)\Delta_0,
      \end{split}
  \end{equation*}
  where the last inequality follows from the marginal constraints.
  \qed
\end{proof}

\begin{theorem}
\label{thm:ucoot_robust_copy}
(UCOOT is robust to outliers)
Consider two sample-feature spaces $\widetilde{\mathds X_1}, \mathds X_2$ as defined in Assumption \ref{assump:robust_copy}. Let $\delta \eqdef 2(\lambda_1 + \lambda_2)(1 - \alpha_s\alpha_f)$ and $K = M + \frac{1}{M}\ucoot(\mathbb X_1, \mathbb X_2) +\delta$, where $M= m(\pi^s) = m(\pi^f)$ is the transported mass between clean data. Then:
   \begin{equation*} 
    \begin{split}
      \ucoot(\widetilde{\mathbb X_1}, \mathbb X_2)
      \leq \alpha_s \alpha_f \ucoot(\mathbb X_1, \mathbb X_2)
      + 
      \delta M \left[ 1 - \exp \left( {- \frac{\Delta_{\infty}(1+M) + K}{\delta M}} \right) \right].
    \end{split}
  \end{equation*}
\end{theorem}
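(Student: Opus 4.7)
The plan is to upper-bound $\ucoot(\widetilde{\mathbb X_1}, \mathbb X_2)$ by evaluating the UCOOT objective on an explicit parametric feasible pair built from the clean optimum, and then optimizing a scaling parameter so that the exponential term of the statement emerges from a one-variable relative-entropy minimization. Concretely, I would fix an optimal pair $(\pi^{s*}, \pi^{f*})$ for $\ucoot(\mathbb X_1, \mathbb X_2)$ with common transported mass $M$, and take as candidate $\tilde\pi^a_t = t\,\pi^{a*} + \eta^a_t$ for $a \in \{s,f\}$ and $t \in [0,1]$, where $\eta^a_t$ is supported on $\mathcal O^a \times \mathcal X^a_2$ with first marginal proportional to $\varepsilon^a$. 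The second marginals and total masses of $\eta^s_t$ and $\eta^f_t$ are calibrated so that the equal-mass constraint $m(\tilde\pi^s_t) = m(\tilde\pi^f_t)$ holds and so that $\tilde\pi^a_{t,\#1}$ matches the mixture structure of $\widetilde\mu^a_1 = \alpha_a\mu^a_1 + (1-\alpha_a)\varepsilon^a$. By feasibility, $\ucoot(\widetilde{\mathbb X_1}, \mathbb X_2) \leq F_\lambda(\tilde\pi^s_t, \tilde\pi^f_t)$, so the remaining task is to upper-bound the latter.

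Next, I would expand $F_\lambda(\tilde\pi^s_t, \tilde\pi^f_t)$ term by term. The transport integral splits into clean--clean, clean--outlier and outlier--outlier pieces; the first contributes $t^2\iint|\xi_1-\xi_2|^p\,\mathrm d\pi^{s*}\,\mathrm d\pi^{f*}$, and the mixed pieces are controlled by $\Delta_\infty$ times the relevant mass products. The source KL term $\kl(\tilde\pi^s_{t,\#1}\otimes\tilde\pi^f_{t,\#1}\,|\,\widetilde\mu^s_1\otimes\widetilde\mu^f_1)$ is handled with the product-measure identity
\begin{equation*}
\kl(\mu_1\otimes\mu_2\,|\,\nu_1\otimes\nu_2) = m(\mu_2)\,\kl(\mu_1|\nu_1) + m(\mu_1)\,\kl(\mu_2|\nu_2) + \bigl(m(\mu_1)-m(\nu_1)\bigr)\bigl(m(\mu_2)-m(\nu_2)\bigr),
\end{equation*}
combined with the scaling rule $\kl(tU|V) = t\log t\cdot m(U) + t\,\kl(U|V) + (1-t)\,m(V)$ and the crucial observation that, under the disjoint-support convention, $\widetilde\mu^a_1$ restricted to $\mathcal X^a_1$ equals $\alpha_a\mu^a_1$; this is exactly what produces the $\alpha_s\alpha_f$ prefactor in front of $\ucoot(\mathbb X_1,\mathbb X_2)$. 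The target KL is simpler since $\mu^a_2$ is uncontaminated. Collecting the contributions and using the relation $\delta = 2(\lambda_1+\lambda_2)(1-\alpha_s\alpha_f)$, I would reach a bound of the form
\begin{equation*}
F_\lambda(\tilde\pi^s_t, \tilde\pi^f_t) \leq \alpha_s\alpha_f\,\ucoot(\mathbb X_1,\mathbb X_2) + t\bigl[\Delta_\infty(1+M)+K\bigr] + \delta M\,\bigl(t\log t - t + 1\bigr),
\end{equation*}
where $K = M + \ucoot(\mathbb X_1,\mathbb X_2)/M + \delta$ absorbs the residual constants from the rescaled cost and KL terms. The right-hand side is convex in $t\geq 0$; its first-order condition yields $t^\star = \exp\bigl(-(\Delta_\infty(1+M)+K)/(\delta M)\bigr)$, and the corresponding minimum value equals $\delta M\bigl[1-\exp\bigl(-(\Delta_\infty(1+M)+K)/(\delta M)\bigr)\bigr]$, which is exactly the second summand in the statement.

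The main obstacle I foresee lies in the source KL expansion. Because UCOOT does not force $\pi^{a*}_{\#1}=\mu^a_1$, the candidate first-marginal product is a genuine tensor product of mixtures against $\widetilde\mu^s_1\otimes\widetilde\mu^f_1$, so the product identity and scaling rule above must be applied carefully together with the joint convexity of $\kl$ and the disjoint-support assumption on $\mathcal X^a_1$ and $\mathcal O^a$, in order to (i) cleanly isolate the $\alpha_s\alpha_f$ prefactor in front of the clean UCOOT and (ii) match the precise constant $K$ inside the exponential. Looser bookkeeping would either destroy the saturation of the upper bound at $\delta M$ as $\Delta_\infty\to\infty$ -- which is the whole point of the robustness statement and its contrast with Proposition~\ref{prop:coot-not-robust_copy} -- or alter the linear-in-$\ucoot(\mathbb X_1,\mathbb X_2)$ coefficient.
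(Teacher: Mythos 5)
There is a genuine gap, and it lies in where you attach the scaling parameter $t$. The paper's feasible candidate is $\widetilde{\pi}^s = \alpha_s \pi^s + t(1-\alpha_s)\,\varepsilon^s \otimes \mu^s_2$ and $\widetilde{\pi}^f = \alpha_f \pi^f + t(1-\alpha_f)\,\varepsilon^f \otimes \mu^f_2$: the \emph{clean} optimal plans keep the fixed weights $\alpha_s,\alpha_f$ (so that the candidate marginals are mixtures with exactly the same weights as $\widetilde{\mu}^a_1=\alpha_a\mu^a_1+(1-\alpha_a)\varepsilon^a$), and only the \emph{outlier} block is scaled by $t$, which is then driven exponentially small. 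This is what makes the argument work: the clean--clean transport cost enters with the $t$-free factor $\alpha_s\alpha_f$, joint convexity of $\kl$ applied to these matched mixtures gives $\kl(\widetilde{\pi}^k_{\#1}\vert\widetilde{\mu}^k_1)\le \alpha_k\,\kl(\pi^k_{\#1}\vert\mu^k_1)+(1-\alpha_k)\varphi(t)$, so the coefficient multiplying $\varphi(t)$ is proportional to the outlier fractions and is eventually bounded by $\delta M=2(\lambda_1+\lambda_2)(1-\alpha_s\alpha_f)M$, after which the one-variable lemma $\min_t\{at+b\varphi(t)\}=b(1-e^{-a/b})$ yields the stated saturation. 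Your candidate $\tilde\pi^a_t=t\,\pi^{a*}+\eta^a_t$ does the reverse: $t$ scales the clean plan. Then, as you yourself write, the clean--clean cost contributes $t^2\iint|\xi_1-\xi_2|^p\,\mathrm d\pi^{s*}\mathrm d\pi^{f*}$, so a $t$-independent term $\alpha_s\alpha_f\,\ucoot(\mathbb X_1,\mathbb X_2)$ cannot emerge from your expansion; and at the optimizer $t^\star=\exp(-(\Delta_\infty(1+M)+K)/(\delta M))$, which is tiny for large $\Delta_\infty$, your plan transports essentially no clean mass, so the marginal penalty must include the cost of destroying the \emph{clean} mass. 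That penalty scales with the clean proportions (through $\lambda_1+\lambda_2$ times terms of order $\alpha_s,\alpha_f$), not with $(1-\alpha_s\alpha_f)$, so the coefficient of $\varphi(t)$ cannot be reduced to $\delta M$ and the claimed intermediate bound $\alpha_s\alpha_f\,\ucoot+t[\Delta_\infty(1+M)+K]+\delta M\,\varphi(t)$ is not attainable from your construction. The correct intuition is ``keep transporting the clean part at its natural proportion, and destroy (all but an exponentially small fraction $t$ of) the outlier mass, paying only the bounded penalty $\varphi(t)\le 1$ weighted by the outlier fraction.''

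A secondary issue: you invoke a ``disjoint-support convention'' under which $\widetilde\mu^a_1$ restricted to $\mathcal X^a_1$ equals $\alpha_a\mu^a_1$, and credit it with producing the $\alpha_s\alpha_f$ prefactor. Assumption 1 does not require $\mathcal O^a$ and $\mathcal X^a_1$ to be disjoint, and no such restriction argument is needed: in the paper the prefactor comes from weighting the clean plans by $\alpha_s,\alpha_f$ in the candidate and from joint convexity of $\kl$ applied to the two mixtures, which is valid whether or not the supports overlap. Your toolkit otherwise matches the paper's (feasibility bound on an explicit candidate, splitting the cost with $\Delta_\infty$, the product-measure $\kl$ identity, and the $at+b\varphi(t)$ minimization), so the fix is to swap the roles of $t$ and $\alpha_a$ in the construction and redo the bookkeeping accordingly.
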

To get the exponential bound of this theorem, we use the following lemma.
\begin{lemma}
\label{slem:bound}
Let $\varphi: t \in (0, 1] \mapsto t\log(t) - t + 1$ and $f_{a, b}: t \in (0, 1] \mapsto t \to at + b \varphi(t)$ for some $a, b > 0$. 
Then:
\begin{equation*}
    \min_{t \in (0, 1]} f_{a, b}(t) = b(1 - e^{-a/b}) = f_{a, b}(e^{-\frac{a}{b}}).
\end{equation*}
\end{lemma}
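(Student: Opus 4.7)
The plan is a direct single-variable calculus argument: establish strict convexity of $f_{a,b}$ on $(0,1]$, locate the unique interior critical point, and verify it lies in the admissible interval before evaluating.

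First I would expand $f_{a,b}(t) = at + b(t\log t - t + 1)$ and differentiate once to obtain $f'_{a,b}(t) = a + b\log t$, using that $\varphi'(t) = \log t$. The first-order condition $f'_{a,b}(t^*) = 0$ yields the candidate $t^* = e^{-a/b}$. Since $a, b > 0$, we have $a/b > 0$ and therefore $t^* = e^{-a/b} \in (0, 1)$, so the candidate is automatically admissible.

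Next I would confirm this is indeed the global minimum on $(0,1]$. A second differentiation gives $f''_{a,b}(t) = b/t > 0$ throughout $(0,1]$, so $f_{a,b}$ is strictly convex and the interior critical point $t^*$ is its unique minimizer on the half-open interval (the boundary point $t=1$ gives $f_{a,b}(1) = a > 0$, which by convexity cannot beat the interior critical value, and a sanity check at the open end $t \to 0^+$ gives $f_{a,b}(t) \to b$, also larger than the claimed minimum since $1 - e^{-a/b} < 1$).

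Finally I would substitute $t^* = e^{-a/b}$ into $f_{a,b}$. Using $\log t^* = -a/b$, the $\varphi$-term simplifies as $t^*\log t^* - t^* + 1 = -(a/b)e^{-a/b} - e^{-a/b} + 1$. Multiplying by $b$ and adding $at^* = a e^{-a/b}$ produces cancellation of the $a e^{-a/b}$ terms, leaving $f_{a,b}(t^*) = -b e^{-a/b} + b = b(1 - e^{-a/b})$, which is the claimed value. There is no substantive obstacle here: the only thing to be careful about is the cancellation in the final substitution and the half-open nature of the domain, both of which are handled by the strict convexity check.
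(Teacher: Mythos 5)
Your proof is correct and follows the same route as the paper's: the paper simply notes that $f_{a,b}$ is convex so cancelling the gradient suffices, which is exactly your first-order condition $a + b\log t = 0$ giving $t^* = e^{-a/b}$, followed by substitution. Your added checks (that $t^*$ lies in $(0,1)$, the second-derivative computation, and the boundary values at $t=1$ and $t\to 0^+$) are sound elaborations of what the paper leaves implicit.
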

\begin{proof}
  Since $f_{a,b}$ is convex, cancelling the gradient is sufficient for optimality. The solution follows immediately. \qed
\end{proof}
\begin{proof}
  The proof uses the same core idea of \citep{Fatras21} but is slightly more technical for two reasons: (1) we consider arbitrary outlier distributions instead of simple Diracs; (2) we consider sample-feature outliers which requires more technical derivations. 
  
  The idea of proof is as follows. First, we construct sample and feature couplings from the solution of "clean" UCOOT and the reference measures. Then, they are used to upper bound the "noisy" UCOOT. By manipulating this bound, the "clean" UCOOT term will appear. A variable $t \in (0,1)$ is also introduced in the fabricated couplings. The upper bound becomes a function of $t$ and can be optimized to obtain the final bound.
  
  Now, we prove Theorem 2.
  
  \paragraph{Fabricating sample and feature couplings.} Given the equal-mass solution $(\pi^s, \pi^f)$ of the UCOOT problem, with $m(\pi^s) = m(\pi^f) = M$, consider, for $t \in (0,1)$, a pair of sub-optimal transport plans:
  \begin{align*}
    &\widetilde{\pi}^s = \alpha_s \pi^s + t (1-\alpha_s) \varepsilon_s \otimes \mu^s_2\\
    &\widetilde{\pi}^f = \alpha_f \pi^f + t (1-\alpha_f) \varepsilon_f \otimes \mu^f_2.
  \end{align*}
  Then, for $a\in \{s, f\}$, it holds:
  \begin{itemize}
    \item $\widetilde{\pi}^a_{\#1} = \alpha_k \pi^a_{\#1} + t (1 - \alpha_a) \varepsilon_a$, 
    \item $\widetilde{\pi}^a_{\#2} = \alpha_k \pi^a_{\#2} + t (1 - \alpha_a) \mu^a_2$, 
    \item $m(\widetilde{\mu}^a_1) = 1$ and $m(\widetilde{\pi}^a) = \alpha_a M + (1-\alpha_a) t$.
  \end{itemize}
  \paragraph{Establishing and manipulating the upper bound.} Denote $q \eqdef (1 - \alpha_s)(1 - \alpha_f), s \eqdef \alpha_s (1-\alpha_f) + \alpha_f (1 - \alpha_s)$ and recall that on $\widetilde{\cS}$, the cost $C$ is upper bounded by $\Delta_{\infty} = \max_{\widetilde{\cS}}|\xi_1 - \xi_2|^p$. First we upper bound the transportation cost:
  \begin{equation*}
    \label{seq:cost-split}
    \begin{split}
      &\int_{\widetilde{\mathcal S}} C \; \mathrm d\widetilde{\pi}^s \; \mathrm d\widetilde{\pi}^f \\
      &= \alpha_s\alpha_f\int_{\widetilde{\cS}} C \; \mathrm d\pi^s \; d\pi^f + 
      t \sum_{k \neq i} (1-\alpha_i) \alpha_k \int_{\widetilde{\mathcal S}} C \; \mathrm d \varepsilon_i \; \mathrm d\mu^i_2  \; \mathrm d\pi^k +
      q t^2 \int_{\widetilde{\cS}} C \; \mathrm d \varepsilon_s \; \mathrm d\mu_2^s \; \mathrm d\varepsilon_f \; \mathrm d\mu^f_2 \\
      &\leq \alpha_s\alpha_f \int_{\mathcal S} C \; \mathrm d\pi^s \;\mathrm d\pi^f +
      \Delta_{\infty}(Ms + q)t\enspace,
    \end{split}
  \end{equation*}
 since $t^2 \leq t$.

Second, we turn to the KL marginal discrepancies. We would like to extract the KL terms involving only the clean transport plans from the contaminated ones. We first detail both joint KL divergences for the source measure indexed by 1. The same holds for the target measure:
  \begin{equation}
  \label{seq:kl-split}
  \begin{split}
         &\kl(\widetilde{\pi}^s_{\#1} \otimes \widetilde{\pi}^f_{\#1} \vert \widetilde{\mu}^s_1 \otimes \widetilde{\mu}^f_1) = 
    \sum_{k \neq i} m(\widetilde{\pi}^i) \kl(\widetilde{\pi}^k_{\#1} \vert \widetilde{\mu}^k_1) + 
    \prod_{k} \big( m(\widetilde{\pi}^k) - 1 \big)\\
    &
    \kl(\pi^s_{\#1} \otimes \pi^f_{\#1} \vert \mu^s_1 \otimes \mu^f_1) = 
  M \sum_k \kl(\pi^k_{\#1} \vert \mu^k_1) + (M-1)^2.
    \end{split}
  \end{equation}
  Now we upper bound each smaller KL term using the joint convexity of the KL divergence:
  \begin{equation*}
    \begin{split}
      \kl(\widetilde{\pi}^k_{\#1} \vert \widetilde{\mu}^k_1) &\leq
      \alpha_k \kl(\pi^k_{\#1} \vert \mu^k_1) + (1 - \alpha_k) \kl(t \varepsilon_k \vert \varepsilon_k) \\
      &= \alpha_k \kl(\pi^k_{\#1} \vert \mu^k_1) + (1 - \alpha_k) \varphi(t),
    \end{split}
  \end{equation*}
  where $\varphi(t) = t \log t - t + 1$, for $t > 0$. Thus, for $k\neq i$:
  \begin{equation*}
    \begin{split}
      &m(\widetilde{\pi}^i) \kl(\widetilde{\pi}^k_{\#1} \vert \widetilde{\mu}^k_1)
      \leq m(\widetilde{\pi}^i) \alpha_k \kl(\pi^k_{\#1} \vert \mu^k_1) + m(\widetilde{\pi}^i) (1 - \alpha_k) \varphi(t) \\
      &= \alpha_i\alpha_k M \kl(\pi^k_{\#1} \vert \mu^k_1) + t (1-\alpha_i) \alpha_k \kl(\pi^k_{\#1} \vert \mu^k_1) + 
      \alpha_i(1-\alpha_k) M \varphi(t) + t q\varphi(t).
    \end{split}
  \end{equation*}
  Summing over $f$ and $s$, we obtain:
  \begin{equation*}
    \begin{split}
      &\sum_{k \neq i} m(\widetilde{\pi}^i) \kl(\widetilde{\pi}^k_{\#1} \vert \widetilde{\mu}^k_1) \\ 
      &\leq \alpha_s\alpha_f M \sum_k \kl(\pi^k_{\#1} \vert \mu^k_1) + 
      t \sum_{k \neq i} (1-\alpha_i) \alpha_k \kl(\pi^k_{\#1} \vert \mu^k_1) + M s \varphi(t)+ 2q t \varphi(t) \\
      &\leq (\alpha_s\alpha_f + \frac{t s}{M})\left(\kl(\pi^s_{\#1} \otimes \pi^f_{\#1} \vert \mu^s_1 \otimes \mu^f_1) - (1-M)^2\right) + M s \varphi(t)+ 2q t \varphi(t).
      \end{split}
  \end{equation*}
   where, in the last bound,  we used the second equation of \eqref{seq:kl-split} and the fact that $\alpha_s(1-\alpha_f) \leq s$ and  $\alpha_f(1-\alpha_s) \leq s$.
The product of masses of \eqref{seq:kl-split} can be written:
  \begin{equation*}
    \begin{split}
      \prod_{k} \big( m(\widetilde{\pi}^k) - 1 \big) &= \prod_k \big( \alpha_k(M-1) + (1-\alpha_k)(t-1) \big) \\
      &= \alpha_s\alpha_f(1-M)^2 + s(1-M)(1-t) + q(1-t)^2.
    \end{split}
  \end{equation*}
  Thus, combining these upper bounds for the source measure:
  \begin{equation*}
    \begin{split}
      \kl(\widetilde{\pi}^s_{\#1} \otimes \widetilde{\pi}^f_{\#1} \vert \widetilde{\mu}^s_1 \otimes \widetilde{\mu}^f_1) 
      &\leq \alpha_s\alpha_f \kl(\pi^s_{\#1} \otimes \pi^f_{\#1} \vert \mu^s_1 \otimes \mu^f_1) \\
      &+ \frac{ts}{M}\left(\kl(\pi^s_{\#1} \otimes \pi^f_{\#1} \vert \mu^s_1 \otimes \mu^f_1) - (1-M)^2\right) \\
      &+ \big[  sM \varphi(t)+ 2q t \varphi(t) + s(1-M)(1-t) + q(1-t)^2 \big],
    \end{split}
  \end{equation*}
  and similarly, for the target measure:
  \begin{equation*}
    \begin{split}
      \kl(\widetilde{\pi}^s_{\#2} \otimes \widetilde{\pi}^f_{\#2} \vert \mu^s_2 \otimes \mu^f_2) 
      &\leq \alpha_s\alpha_f \kl(\pi^s_{\#2} \otimes \pi^f_{\#2} \vert \mu^s_2 \otimes \mu^f_2) \\
      &+ \frac{ts}{M}\left(\kl(\pi^s_{\#2} \otimes \pi^f_{\#2} \vert \mu^s_2 \otimes \mu^f_2) - (1-M)^2\right) \\
     &+ \big[ sM \varphi(t)+ 2q t \varphi(t) + s(1-M)(1-t) + q(1-t)^2 \big].
    \end{split}
  \end{equation*}
  Then, for every $0 < t \leq 1$, by summing all bounds:
  \begin{equation*}
    \begin{split}
      \ucoot(\widetilde{\mathds X_1}, \mathds X_2) &\leq \alpha_s\alpha_f \ucoot(\mathds X_1, \mathds X_2) + 
      \Delta_{\infty}(Ms + q)t \\
      &+ \frac{ts}{M}(\ucoot(\mathds X_1, \mathds X_2) - (\lambda_1 + \lambda_2)(1-M)^2) \\
      &+ (\lambda_1 + \lambda_2) \big[ s M \varphi(t) + 2q t \varphi(t) + s(1-M)(1-t) + q(1-t)^2 \big].
    \end{split}
  \end{equation*}
  \paragraph{Minimizing the upper bound with respect to $t$.} To obtain the exponential bound, we would like have an upper bound of the form $at + b\varphi(t)$, so that lemma \ref{slem:bound} applies. Knowing that $1 \leq 2(t + \varphi(t))$ for any $t \in [0, 1]$:
  
  Let's first isolate the quantity that is not of this form:
  We have:
  \begin{equation*}
    \begin{split}
      2q t \varphi(t) + s(1 - M) + q(t-1)^2 &= 2qt^2\log(t) - 2qt^2 + 2qt + s(1-M) + qt^2 -2qt + q \\
      &=  2qt^2\log(t) - qt^2 + s(1-M) + q \\
      &= q\varphi(t^2) + s(1-M) \leq q + s(1-M) \\
      &\leq 2(q + s(1-M)) (t + \varphi(t)) \\
      &= 2(1 -\alpha_s\alpha_f - sM) (t + \varphi(t)).
    \end{split}
  \end{equation*}
  The new full bound is given by:
  \begin{equation*}
      \ucoot(\widetilde{\mathds X_1}, \mathds X_2) \leq \alpha_s\alpha_t \ucoot(\mathds X_1, \mathds X_2) + A' t + B'\varphi(t),
  \end{equation*}
  where
  \begin{equation*}
      \begin{split}
          A' &\eqdef \Delta_{\infty}(Ms + q) + s(M-1) + \frac{s}{M}\ucoot(\mathds X_1, \mathds X_2) - \frac{s}{M}(\lambda_1 + \lambda_2)(1-M)^2 \\
          &+ 2(\lambda_1 + \lambda_2) (1-\alpha_s\alpha_f - sM) \\
          & \leq \Delta_{\infty}(M + 1) + M + \frac{1}{M}\ucoot(\mathds X_1, \mathds X_2) + 2(\lambda_1 + \lambda_2) (1-\alpha_s\alpha_f) \eqdef A \\
          B' &\eqdef 2sM(\lambda_1 + \lambda_2) (1-\alpha_s\alpha_f) \leq 2M(\lambda_1 + \lambda_2) (1-\alpha_s\alpha_f) \eqdef B.
      \end{split}
  \end{equation*}
  In both inequalities, we use the fact that $s \leq 1 - \alpha_s \alpha_f \leq 1$. Using Lemma \ref{slem:bound}, we obtain
  \begin{equation*}
      \ucoot(\widetilde{\mathds X_1}, \mathds X_2)
      \leq \alpha_s\alpha_f \ucoot(\mathbb X, \mathbb Y) + B \left[ 1 - \exp{ \left(- \frac{A}{B} \right) }\right].
  \end{equation*}
  The upper bound of Theorem \ref{thm:ucoot_robust_copy} then follows. \qed
\end{proof}

\subsection{Numerical aspects}
We claim that, in the discrete setting, by taking $\varepsilon$ sufficiently small in the entropic UCOOT problem, we can obtain a solution ``close'' to the non-entropic case. We formalize this claim and prove it in the following result.
\begin{claim} \label{convergence_minimiser_unbalanced}
  Let $(\pi_{\varepsilon}^s, \pi_{\varepsilon}^f)$ be an equal-mass solution of the problem 
  $\ucoot_{\lambda, \varepsilon}(\mathbb X_1, \mathbb X_2)$. Denote $\mu^s = \mu_1^s \otimes \mu_2^s$ and $\mu^f = \mu_1^f \otimes \mu_2^f$.
  \begin{enumerate}
    \item When $\varepsilon \to \infty, \pi_{\varepsilon}^s \rightharpoonup \sqrt{\frac{m(\mu^f)}{m(\mu^s)}} \mu^s$ 
    and $\pi_{\varepsilon}^f  \rightharpoonup \sqrt{\frac{m(\mu^s)}{m(\mu^f)}} \mu^f$.

    \item When $\varepsilon \to 0$, if the spaces $\mathcal X_k^s$ and $\mathcal X_k^f$ are finite, 
    for $k=1,2$, then $\ucoot_{\lambda, \varepsilon}(\mathbb X_1, \mathbb X_2) \to \ucoot_{\lambda}(\mathbb X_1, \mathbb X_2)$ and 
    any cluster point $\widehat{\pi}^s \otimes \widehat{\pi}^f$ of the sequence 
    $(\pi_{\varepsilon}^s \otimes \pi_{\varepsilon}^f)_{\varepsilon}$ will induce an equal-mass 
    solution $(\widehat{\pi}^s, \widehat{\pi}^f)$ of the problem 
    $\ucoot_{\lambda}(\mathbb X_1, \mathbb X_2)$. Furthermore,
    \begin{equation*}
      \kl(\widehat{\pi}^s \otimes \widehat{\pi}^f | \mu^s \otimes \mu^f) = 
      \min_{(\pi^s, \pi^f)} \kl(\pi^s \otimes \pi^f \vert \mu^s \otimes \mu^f),
    \end{equation*}
    where the infimum is taken over all equal-mass solutions of $\ucoot_{\lambda}(\mathbb X_1, \mathbb X_2)$.
  \end{enumerate}
\end{claim}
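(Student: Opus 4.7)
The plan is to handle the two limits separately. For the high-temperature limit $\varepsilon \to \infty$, the strategy is to show that the entropy term dominates, so that optimizers concentrate on the unique equal-mass minimizer of $\kl(\cdot \vert \mu^s \otimes \mu^f)$. First, I would identify this minimizer by exhibiting $\pi^s_\infty \eqdef \sqrt{m(\mu^f)/m(\mu^s)}\,\mu^s$ and $\pi^f_\infty \eqdef \sqrt{m(\mu^s)/m(\mu^f)}\,\mu^f$, which manifestly satisfy $m(\pi^s_\infty) = m(\pi^f_\infty)$ and $\pi^s_\infty \otimes \pi^f_\infty = \mu^s \otimes \mu^f$, hence attain $\kl = 0$. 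Uniqueness under the equal-mass constraint follows by noting that $\pi^s \otimes \pi^f = \mu^s \otimes \mu^f$ forces $\pi^s$ and $\pi^f$ to be scalar multiples of $\mu^s$ and $\mu^f$ (by marginalization), with the scalars pinned down by the equal-mass constraint. Testing the regularized problem against this competitor gives
\begin{equation*}
F_\lambda(\pi^s_\varepsilon, \pi^f_\varepsilon) + \varepsilon \kl(\pi^s_\varepsilon \otimes \pi^f_\varepsilon \vert \mu^s \otimes \mu^f) \leq F_\lambda(\pi^s_\infty, \pi^f_\infty),
\end{equation*}
and dividing by $\varepsilon$ forces the KL term to vanish in the limit; lower semicontinuity of the KL together with the uniqueness above then yields the first claim.

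For the $\varepsilon \to 0$ limit, I would use a $\Gamma$-convergence perspective. The convergence of values follows from two one-sided inequalities: $\ucoot_{\lambda,\varepsilon} \geq \ucoot_\lambda$ is immediate because $\varepsilon \kl \geq 0$, while the upper bound is obtained by plugging any equal-mass minimizer $(\pi^{s*}, \pi^{f*})$ of the unregularized problem (whose existence is guaranteed by Proposition \ref{prop:existence}) as a competitor, giving $\ucoot_{\lambda,\varepsilon} \leq F_\lambda(\pi^{s*}, \pi^{f*}) + \varepsilon \kl(\pi^{s*} \otimes \pi^{f*} \vert \mu^s \otimes \mu^f) \to \ucoot_\lambda$ as $\varepsilon \to 0$; finiteness of the reference KL is ensured by the finite-space hypothesis and the full-support assumption on the reference measures. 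For the convergence of minimizers, coercivity of $F_\lambda$ (already exploited in the proof of Proposition \ref{prop:existence}) yields uniform mass bounds on $(\pi^s_\varepsilon, \pi^f_\varepsilon)$, so in the finite-dimensional setting the sequence lies in a compact subset of the equal-mass cone. Any cluster point $(\widehat\pi^s, \widehat\pi^f)$ inherits the equal-mass constraint by continuity and satisfies $F_\lambda(\widehat\pi^s, \widehat\pi^f) \leq \liminf F_\lambda(\pi^s_\varepsilon, \pi^f_\varepsilon) \leq \lim \ucoot_{\lambda,\varepsilon} = \ucoot_\lambda$ by lower semicontinuity, hence is a minimizer of the non-regularized problem.

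The minimum-entropy selection is the subtlest step and the main obstacle I anticipate. For any equal-mass minimizer $(\pi^{s*}, \pi^{f*})$ of the unregularized problem, optimality of $(\pi^s_\varepsilon, \pi^f_\varepsilon)$ for the regularized one together with $F_\lambda(\pi^s_\varepsilon, \pi^f_\varepsilon) \geq \ucoot_\lambda = F_\lambda(\pi^{s*}, \pi^{f*})$ gives, after a straightforward rearrangement,
\begin{equation*}
\kl(\pi^s_\varepsilon \otimes \pi^f_\varepsilon \vert \mu^s \otimes \mu^f) \leq \kl(\pi^{s*} \otimes \pi^{f*} \vert \mu^s \otimes \mu^f).
\end{equation*}
Passing to the liminf along the convergent subsequence and invoking lower semicontinuity of KL delivers the stated minimum-entropy characterization. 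The delicate technical points are (i) ensuring a competitor $(\pi^{s*}, \pi^{f*})$ with finite reference KL exists (guaranteed by the finite support assumption and the full support of $\mu^s, \mu^f$), and (ii) transferring the liminf inequality through the tensor product structure, which follows from standard joint lower semicontinuity of KL together with the continuity of the projection maps on the finite-dimensional compact set of competitors.
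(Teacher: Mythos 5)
Your proposal is correct and follows essentially the same route as the paper's proof: for $\varepsilon\to\infty$ the paper also tests against the rescaled pair $\bigl(\sqrt{m(\mu^f)/m(\mu^s)}\,\mu^s,\sqrt{m(\mu^s)/m(\mu^f)}\,\mu^f\bigr)$ and divides by $\varepsilon$ to kill the KL term, and for $\varepsilon\to0$ it likewise combines the value sandwich, a coercivity-based mass bound and compactness, and the comparison $\kl(\pi^s_\varepsilon\otimes\pi^f_\varepsilon\vert\mu^s\otimes\mu^f)\le\kl(\pi^{s*}\otimes\pi^{f*}\vert\mu^s\otimes\mu^f)$ (obtained there by dividing the optimality inequality by $\varepsilon_n$) before passing to the limit. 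Your added details (uniqueness of the equal-mass factorization, finiteness of the reference KL, lower semicontinuity in place of continuity on finite spaces) are minor refinements of the same argument.
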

\begin{proof}
  Denote $\pi_{\varepsilon} = \pi_{\varepsilon}^s \otimes \pi_{\varepsilon}^f$.
  \begin{enumerate}
    \item When $\varepsilon \to \infty$: the sub-optimality of $\left( \sqrt{\frac{m(\mu^f)}{m(\mu^s)}} \mu^s, \sqrt{\frac{m(\mu^s)}{m(\mu^f)}} \mu^f \right)$ implies
    \begin{equation*}
      \begin{split}
        \varepsilon \kl(\pi_{\varepsilon} \vert \mu^s \otimes \mu^f) 
        &\leq F_{\lambda}(\pi_{\varepsilon}^s, \pi_{\varepsilon}^f) + 
        \varepsilon \kl(\pi_{\varepsilon} \vert \mu^s \otimes \mu^f) \\
        &\leq F_{\lambda} \left( \sqrt{\frac{m(\mu^f)}{m(\mu^s)}} \mu^s, \sqrt{\frac{m(\mu^s)}{m(\mu^f)}} \mu^f \right) + 
        \varepsilon \kl( \mu^s \otimes \mu^f \vert \mu^s \otimes \mu^f) \\
        &= \iint \vert \xi_1 - \xi_2 \vert^p \mathrm d\mu^s \mathrm d\mu^f.
      \end{split}
    \end{equation*}
    Thus,
    \begin{equation*}
      0 \leq \kl(\pi_{\varepsilon} \vert \mu^s \otimes \mu^f) 
      \leq \frac{1}{\varepsilon} \iint \vert \xi_1 - \xi_2 \vert^p 
      \mathrm d\mu^s \mathrm d\mu^f \to 0,
    \end{equation*}
    whenever $\varepsilon \to \infty$. We deduce that $\kl(\pi_{\varepsilon} \vert \mu^s \otimes \mu^f)$, 
    thus $\pi_{\varepsilon} \rightharpoonup \mu^s \otimes \mu^f$. The conclusion then follows.

    \item Let $(\pi_*^s, \pi_*^f)$ be a solution of 
    $\ucoot_{\lambda}(\mathbb X_1, \mathbb X_2)$. 
    The optimality of $(\pi_{\varepsilon}^s, \pi_{\varepsilon}^f)$ implies
    \begin{equation*}
      \begin{split}
        \ucoot_{\lambda}(\mathbb X_1, \mathbb X_2) 
      &\leq \ucoot_{\lambda}(\mathbb X_1, \mathbb X_2) + 
      \varepsilon \kl(\pi_*^s \otimes \pi_*^f \vert \mu^s \otimes \mu^f).
      \end{split}
    \end{equation*}
    Thus, when $\varepsilon \to 0$, one has 
    $\ucoot_{\lambda, \varepsilon}(\mathbb X_1, \mathbb X_2) \to 
    \ucoot_{\lambda}(\mathbb X_1, \mathbb X_2)$. Now, for every $\varepsilon > 0$,
    \begin{equation*}
      \begin{split}
        \langle C, \mu^s \otimes \mu^f \rangle &= 
        F_{\lambda} \left( \sqrt{\frac{m(\mu^f)}{m(\mu^s)}} \mu^s, \sqrt{\frac{m(\mu^s)}{m(\mu^f)}} \mu^f \right) + 
        \varepsilon \kl( \mu^s \otimes \mu^f \vert \mu^s \otimes \mu^f ) \\
        &\geq F_{\lambda}(\pi_{\varepsilon}^s, \pi_{\varepsilon}^f) + 
        \varepsilon \kl(\pi_{\varepsilon}^s \otimes \pi_{\varepsilon}^f \vert \mu^s \otimes \mu^f) \\
        &\geq F_{\lambda}(\pi_{\varepsilon}^s, \pi_{\varepsilon}^f).
      \end{split}
    \end{equation*}
    On the other hand, following the same proof in Proposition~\ref{eq:ucoot_existence_copy}, we can show that if 
    $m(\pi_{\varepsilon}) \to \infty$, then $F_{\lambda}(\pi_{\varepsilon}^s, \pi_{\varepsilon}^f) \to \infty$, which 
    contradicts the above inequality. So, there exists $M > 0$ such that $m(\pi_{\varepsilon}) \leq M$, 
    for every $\varepsilon > 0$. 
    
    The set $\widetilde{E}_{uco} = \{\pi \in \mathcal M^+(\mathcal S): m(\pi) \leq M\} \cap E_{uco}$ is clearly compact, 
    thus from the sequence of minimisers $(\pi_{\varepsilon})_{\varepsilon} \subset \widetilde{E}_{uco}$ 
    (i.e. $\pi_{\varepsilon} = \pi_{\varepsilon}^s \otimes \pi_{\varepsilon}^f$), we can extract a 
    converging subsequence $(\pi_{\varepsilon_n})_{\varepsilon_n}$ such that 
    $\pi_{\varepsilon_n} \to \widehat{\pi} = \widehat{\pi}^s \otimes \widehat{\pi}^f \in \widetilde{E}_{uco}$, 
    with $m(\widehat{\pi}^s) = m(\widehat{\pi}^f)$.
    The continuity of the divergences implies that,
    $F_{\lambda, \varepsilon}(\pi_{\varepsilon_n}^s, \pi_{\varepsilon_n}^f) \to 
    F_{\lambda}(\widehat{\pi}^s, \widehat{\pi}^f)$, when $\varepsilon \to 0$. We deduce that 
    $\ucoot_{\lambda}(\mathbb X_1, \mathbb X_2) = F_{\lambda}(\widehat{\pi}^s, \widehat{\pi}^f)$, 
    or equivalently $(\widehat{\pi}^s, \widehat{\pi}^f)$ 
    is a solution of $\ucoot_{\lambda}(\mathbb X_1, \mathbb X_2)$. Moreover, we have
    \begin{equation} \label{unbalanced_max_ent}
      \begin{split}
        0 &\leq F_{\lambda}(\pi_{\varepsilon_n}^s, \pi_{\varepsilon_n}^f) - F_{\lambda}(\pi_*^s, \pi_*^f) \\
      &\leq \varepsilon_n \Big( \kl(\pi_*^s \otimes \pi_*^f \vert \mu^s \otimes \mu^f) - 
      \kl(\pi_{\varepsilon_n}^s \otimes \pi_{\varepsilon_n}^f \vert \mu^s \otimes \mu^f) \Big).
      \end{split}
    \end{equation}
    Dividing by $\varepsilon_n$ in \ref{unbalanced_max_ent} and let $\varepsilon_n \to 0$, we have
    \begin{equation*}
      \kl(\widehat{\pi}^s \otimes \widehat{\pi}^f \vert \mu^s \otimes \mu^f) \leq 
      \kl(\pi_*^s \otimes \pi_*^f \vert \mu^s \otimes \mu^f).
    \end{equation*}
    and we deduce that 
    \begin{equation*}
      \kl(\widehat{\pi}^s \otimes \widehat{\pi}^f \vert \mu^s \otimes \mu^f) = 
      \min_{(\pi^s, \pi^f)} \kl(\pi^s \otimes \pi^f \vert \mu^s \otimes \mu^f),
    \end{equation*}
    where the infimum is taken over all solutions of 
    $\ucoot_{\lambda}(\mathbb X_1, \mathbb X_2)$. \qed
  \end{enumerate}
\end{proof}

\section{Algorithmic details} \label{sec_app:algo}

\subsection{Optimization procedure} \label{subsec_app:algo}
Recall that in discrete form, the UCOOT problem reads
\begin{equation} 
\begin{split}
    \label{eq:ucoot-discrete-2_copy}\small
  \min_{\substack{\pi^s, \pi^f \\ 
  \iffalse \in \bbR_+^{n_1, n_2} \\ \pi^f \in \bbR_+^{d_1, d_2} \\ \fi m(\pi^s) = m(\pi^f)}} \sum_{i, j, k, l} &(\bA_{ik} - \bB_{jl})^2\pi^s_{ij}\pi^f_{kl} + \lambda_1 \kl(\pi^s_{\#1} \otimes \pi^f_{\# 1} \vert u_1 )
  + \lambda_2 \kl(\pi^s_{\# 2} \otimes \pi^f_{\# 2} \vert u_2) \\ 
  &+ \varepsilon \kl( \pi^s \otimes \pi^f \vert \mu^s_1 \otimes \mu_2^s \otimes \mu_1^f \otimes \mu_2^f),
\end{split}
\end{equation}
where $\pi_{\# 1} = (\sum_j \pi_{ij})_i$ and $\pi_{\# 2} = (\sum_i \pi_{ij})_j$. Here $\mu_k = \mu_k^s \otimes \mu_k^f$, for $k=1,2$. By Proposition 4 in \citep{Sejourne20}, for fixed $\pi^f \in \mathbb R^{d_1, d_2}_{\geq 0}$, the minimization in \ref{eq:ucoot-discrete-2_copy} is equivalent to solving the following unbalanced OT problem
\begin{equation} \label{eq:ucoot_uot}
  \begin{split}
    \min_{\pi \in \mathbb R^{n_1, n_2}_{\geq 0}} & \; \langle L_{\varepsilon}, \pi \rangle
    + \lambda_1 m_s \kl(\pi_{\#1} \vert \mu_1^s)
    + \lambda_2 m_s \kl(\pi_{\#2} \vert \mu_2^s)
    + \varepsilon m_s \kl(\pi \vert \mu^s_1 \otimes \mu_2^s),
  \end{split}
\end{equation}
where $m_s = m(\pi^s)$ and
\begin{equation*} \label{ucoot:9}
  L_{\varepsilon} := \int \vert A - B \vert^2 \mathrm d \pi^f + 
  \lambda_1 \langle \log \frac{\pi^f_{\#1}}{\mu_1^f}, \pi^f_{\#1} \rangle + 
  \lambda_2 \langle \log \frac{\pi^f_{\#2}}{\mu_2^f}, \pi^f_{\#2} \rangle + 
  \varepsilon \langle \log \frac{\pi^f}{\mu_1^f \otimes \mu_2^f}, \pi^f \rangle,
\end{equation*}
and $\int \vert A - B \vert^2 \mathrm d \pi^f \in \mathbb R^{n_1, n_2}$ defined by $\int \vert A - B \vert^2 \mathrm d\pi^f = A^{\odot 2} \pi^f_{\# 1} \oplus B^{\odot 2} \pi^f_{\# 2} - 2 A \pi^f B^T$. Here, the notations $\otimes$ and $\oplus$ denote the Kronecker product and sum, respectively. For any matrix $M$, we write $M^{\odot 2} := M \odot M$, where $\odot$ is the element-wise multiplication. The exponential, division and logarithm operations are also element-wise. The scalar product is denoted by $\langle \cdot, \cdot \rangle$.

Now, the problem~\ref{eq:ucoot_uot} is of the form
\begin{equation*}
    \min_{P \geq 0} \; \langle C, P \rangle + \rho_1 \kl(P_{\# 1} \vert \mu) + \rho_2 \kl(P_{\# 2} \vert \nu) + \varepsilon \kl(P \vert \mu \otimes \nu),
\end{equation*}
for $\varepsilon, \rho_1, \rho_2 \geq 0$,
and can be solved using the scaling algorithm \citep{Chizat18b} or non-negative penalized regression (NNPR) \citep{Chapel21}, depending on the values of parameters. The complete approximation schemes can be found in Algo~\ref{app:algo_sinkhorn} and Algo~\ref{app:algo_nnpr}. \\

\begin{algorithm}\captionsetup{labelfont={bf}}
\caption{Scaling algorithm \citep{Chizat18b}}
\label{app:algo_sinkhorn}
\begin{algorithmic}
	\STATE {\bfseries Input:} $\bC \in \bbR^{m, n}, \mu \in \bbR^m_{> 0}, \nu \in \bbR^n_{> 0}$, $(\rho_1, \rho_2) \in [0, \infty]^2, \varepsilon > 0$.
	\STATE Initialize $f$ and $g$.
	\REPEAT
	\STATE Update $f$ by: $f = -\frac{\rho_1}{\rho_1 + \varepsilon} \log \sum_j \exp \big( g_j + \log \nu_j - \frac{C_{\cdot,j}}{\varepsilon} \big)$.
	\STATE Update $g$ by: $g = -\frac{\rho_2}{\rho_2 + \varepsilon} \log \sum_i \exp \big( f_i + \log \mu_i - \frac{C_{i,\cdot}}{\varepsilon} \big)$.
	\UNTIL{convergence}
	\STATE Calculate: $P = (\mu \otimes \nu) \exp \big(f \oplus g - \frac{C}{\varepsilon} \big)$.
\end{algorithmic}
\end{algorithm}
\text{ } \\
\begin{algorithm}\captionsetup{labelfont={bf}}
\caption{Non-negative penalized regression (NNPR) \citep{Chapel21}}
\label{app:algo_nnpr}
\begin{algorithmic}
	\STATE {\bfseries Input:} $\bC \in \bbR^{m, n}, \mu \in \bbR^m_{> 0}, \nu \in \bbR^n_{> 0}$, $(\rho_1, \rho_2) \in [0, \infty)^2, \varepsilon \geq 0$.
	\STATE Calculate $\lambda = \rho_1 + \rho_2 + \varepsilon$, then $r = \frac{\varepsilon}{\lambda}$ and $\lambda_i = \frac{\rho_i}{\lambda}$, for $i=1,2$.
	\STATE Initialize $P$.
	\REPEAT
	\STATE Update $P$ by: $P = \frac{P^{\lambda_1 + \lambda_2}}{P_{\# 1}^{\lambda_1} \otimes P_{\# 2}^{\lambda_2}} 
    \odot \left( \mu^{\lambda_1 + r} \otimes \nu^{\lambda_2 + r} \right) 
    \odot \exp\left(-\frac{C}{\lambda} \right)$.
	\UNTIL{convergence}
\end{algorithmic}
\end{algorithm}

It should be noted that scaling algorithm allows for $\rho_k = \infty$ (so $\frac{\rho_k}{\rho_k + \varepsilon} = 1$ and we recover the usual Sinkhorn update), but $\varepsilon$ must be \textit{strictly} positive. 
On the other hand, NNPR allows for every  $\varepsilon \geq 0$, but both $\rho_1$ and $\rho_2$ must be \textit{both finite}.

\paragraph{Complexity} UCOOT's complexity has similar complexity to the  entropic COOT that had been investigated in the supplementary of \citep{Redko20}. The latter solves two inner entropic OT problems which implies roughly quadratic complexity, which is also similar to the complexity of solving GW. However, we note that UCOOT can benefit from the recent advance in OT, for example \citep{Schmitzer19,Meyer21a}.

\subsection{UCOOT helps finding better minima} 

\begin{wrapfigure}{R}{0.45\textwidth}
\begin{minipage}{0.45\textwidth}
\includegraphics[width=0.8\linewidth]{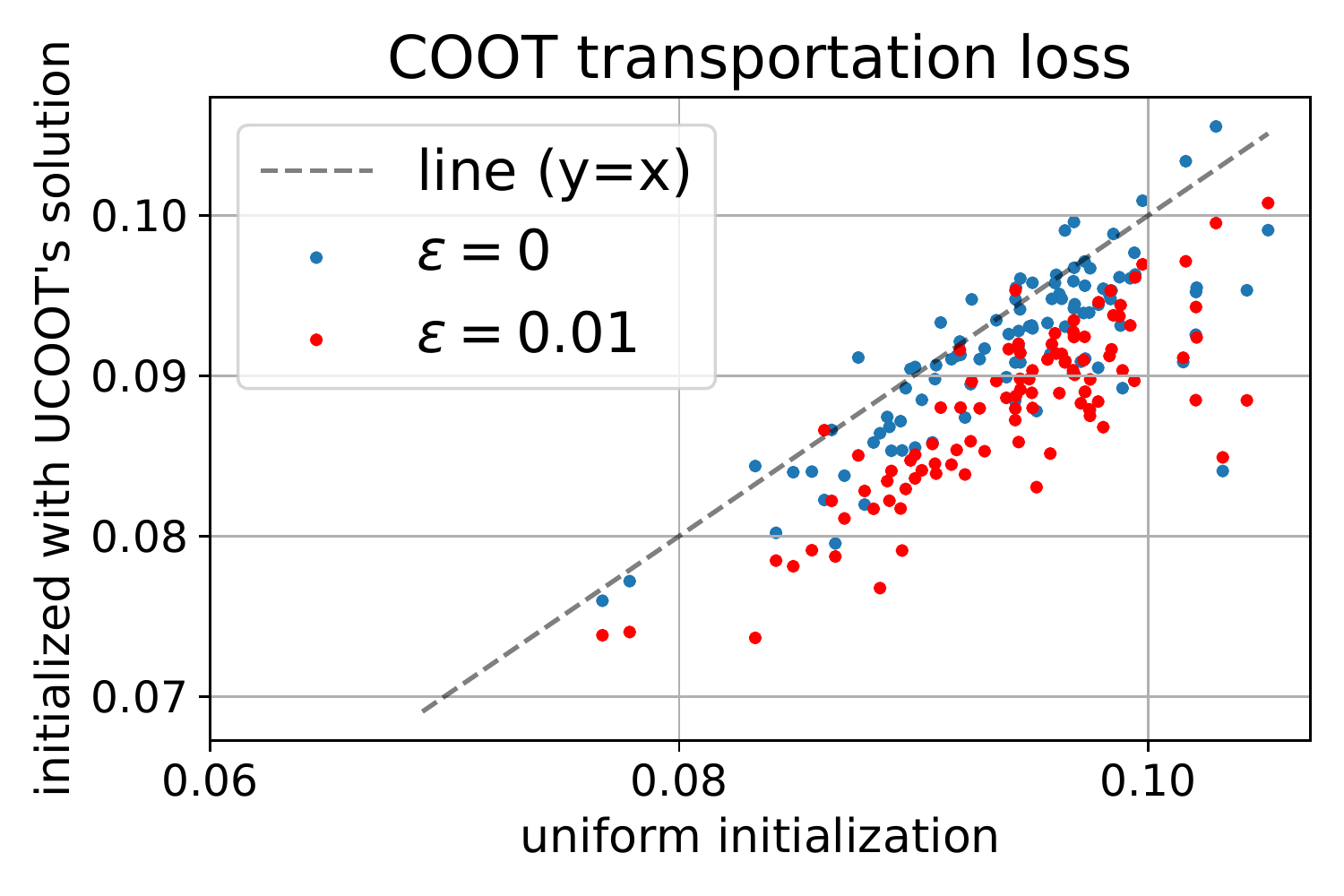}
\captionof{figure}{Scatter plot of the COOT transportation cost with naive (uniform) initialization (y-axis) vs initialization with UCOOT.
\label{fig:init}}
\end{minipage}
\end{wrapfigure}
Interestingly, we find that COOT can achieve better minima when initialized with the UCOOT solutions.  Figure~\ref{fig:init} illustrates how UCOOT can lead to better alignments by finding better local minima of the COOT transportation cost. For 100 random Gaussian datasets $\bA, \bB$ with uniformly sampled shapes $n_1, n_2, d_1, d_2$, we visualize the COOT loss with uniform initialization (y-axis) versus the COOT loss when initializing the COOT BCD with UCOOT's solution. For both $\varepsilon = 0$ and $\varepsilon > 0$, the latter leads to lower COOT costs than the former, on average. 

As COOT is a non-convex problem, the choice of initialization plays an important role. Intuitively, by choosing $\lambda_1$ and $\lambda_2$ sufficiently large, one can use UCOOT to approach COOT. For this reason, the solution of UCOOT can be more informative than the usual uniform initialization, and one can expect to reach better local optimal of COOT.

\section{Experimental details} \label{sec_app:exp}

\subsection{More details on barycentric mapping}  \label{bary_mapping}

The barycentric mapping \citep{Ferradans14,Courty16} is a method to transform the source data to the target domain. Given the source data $X_s \in \mathbb R^{n_s \times d_s}$ and target data $X_t \in \mathbb R^{n_t \times d_t}$, once the optimal transportation plan $P \in \mathbb R^{n_s \times n_t}$ is learned, the transformation of the source to the target domain, can be expressed as: for $i = 1, ..., n_s$,
\begin{equation} \label{bary_optim}
    \widehat{x}_i^{(s)} \in \arg\min_{x \in \mathbb R^{d_t}} \sum_{j = 1}^{n_t} P_{ij} \; c(x, x_j^{(t)}),
\end{equation}
where the example $x^{(t)}_j \in \mathbb R^{d_t}$ corresponds to the $j$-th row of $X_t$ and the cost $c: \mathbb R^{d_t} \times R^{d_t} \to \mathbb R$ measures the discrepancy between two examples in $\mathbb R^{d_t}$. Typically, $c$ is the squared Euclidean distance, so the problem \ref{bary_optim} admits a closed form solution, which is a weighted average of examples in the target domain:
\begin{equation}
    \widehat{x}_i^{(s)} = \sum_{j = 1}^{n_t} \frac{P_{ij}}{p_i} x_j^{(t)},
\end{equation}
where $p_i = \sum_{j} P_{ij}$, or in matrix notation:
\begin{equation} \label{transOT:1}
    \widehat{X}_s= \text{diag}(\frac{1}{P 1_{n_t}}) P X_t \in \mathbb R^{n_s \times d_t},
\end{equation}
where the division is element-wise.

\subsection{Heterogenous Domain Adaptation (HDA)}
\paragraph{More details on label propagation}

Once the sample coupling $P$ is learned, the label propagation works as follows: suppose the labels contain $K$ different classes, 
we apply the one-hot encoding to the source label $y^{(s)}$ to obtain 
$D^{(s)} \in \mathbb R^{K \times n_s}$ where $D^{(s)}_{ki} = 1_{\{y^{(s)}_i = k\}}$. The label proportions on the target data are 
estimated by: $L = D^{(s)} P \in \mathbb R^{K \times n_t}$. Then the prediction can be generated by choosing the 
label with the highest proportion, i.e. $\widehat{y}^{(t)}_j = \arg\max_k L_{kj}$.

\paragraph{Paramater validation}

We tune the hyperparameters of each method via grid search.
\begin{itemize}
  \item For COOT, we choose the regularisation on the feature and sample couplings $\varepsilon_f, \varepsilon_s \in \{0, 0.01, 0.1, 0.5\}$.
  \item For GW, we choose the regularisation parameter $\varepsilon \in \{0, 0.001, 0.005, 0.01, 0.05, 0.1, 0.5\}$.
  \item For UGW and UCOOT, we choose $\lambda_1, \lambda_2 \in \{1, 5, 20, 50\}$ and $\varepsilon \in \{0.01, 0.05, 0.1, 0.5\}$.
  Furthermore, for UGW and GW, before calculating the Euclidean distance matrix for each domain, the matrix of domain data is normalised by max scaling, so that its coordinates are bounded in $[-1,1]$. This pre-processing step improves the performance of the for UGW and GW.
\end{itemize}
For each method, for each combination of tuple of hyperparameters, first, we choose a pair amongst $9$ pairs, then repeat $10$ times the training 
procedure, in which the optimal plan is estimated, then used to calculate the accuracy. We choose the tuple of hyperparameters corresponding to the 
highest average accuracy. This optimal tuple is then applied to all other $8$ tasks, where in each task, the training procedure is repeated $10$ times and we report the average accuracy.

\paragraph{When there is no regularization} In the above hyperparameter tuning process, we only considered $\varepsilon > 0$ for UCOOT and UGW, so that the scaling algorithm \citep{Chizat18b} is applicable. As discussed in Section \ref{subsec_app:algo}, the NNPR solver can allow us to handle the case $\varepsilon = 0$ (i.e. we can estimate directly UCOOT, rather than via its entropic approximation). In this case, we also tune $\lambda_1, \lambda_2 \in \{ 1, 50, 20, 50\}$ and follow exactly the same tuning and testing procedure as in the case $\varepsilon > 0$. We report our finding in Table \ref{tab:hda2}. We observe that, in many tasks, the performance remains competitive while enjoying lower variance. \\

\begin{table}[H]
    \centering
    \small
	\begin{tabular}{c c c c c}
		\toprule
		& \multicolumn{2}{c}{CaffeNet $\to$ GoogleNet} \\
		\midrule
		Domains & COOT & UCOOT ($\varepsilon > 0$) & UCOOT ($\varepsilon = 0$) \\
		\midrule
		
		C $\to$ C & 36.40 ($\pm$ 12.94) & \textbf{44.05 ($\pm$ 19.33)} & 38.60 ($\pm$ 9.16) \\
		\hline
		C $\to$ A & 28.30 ($\pm$ 11.78) & \textbf{31.90 ($\pm$ 7.43)} & 29.45 ($\pm$ 9.94) \\
		\hline
		C $\to$ W & 19.55 ($\pm$ 14.51) & 28.55 ($\pm$ 6.60) & \textbf{40.85 ($\pm$ 12.53)} \\
		\hline
		
		A $\to$ C & \textbf{41.80 ($\pm$ 14.81)} & 39.15 ($\pm$ 17.98) & 18.00 ($\pm$ 9.22) \\
		\hline
		A $\to$ A & \textbf{57.90 ($\pm$ 16.84)} & 42.45 ($\pm$ 15.47) & 40.40 ($\pm$ 8.40) \\
		\hline
		A $\to$ W & 42.10 ($\pm$ 7.80) & 48.55 ($\pm$ 13.06) & \textbf{49.15 ($\pm$ 6.64)} \\
		\hline
		
		W $\to$ C & 8.60 ($\pm$ 6.56) & \textbf{69.80 ($\pm$ 14.91)} & 19.70 ($\pm$ 5.79) \\
		\hline
		W $\to$ A & 16.65 ($\pm$ 10.01) & \textbf{30.55 ($\pm$ 10.09)} & 25.90 ($\pm$ 5.48) \\
		\hline
		W $\to$ W & \textbf{75.30 ($\pm$ 3.26)} & 51.50 ($\pm$ 20.51) & 49.55 ($\pm$ 6.02) \\
		\bottomrule
		Average & 36.29 ($\pm$ 10.95) & \textbf{42.94 ($\pm$ 13.93)} & 34.62 ($\pm$ 11.17) \\    
		\bottomrule
	
	\end{tabular}
	\caption{Unsupervised HDA from CaffeNet to GoogleNet for $\varepsilon > 0$ and $\varepsilon = 0$. UCOOT ($\varepsilon > 0$) corresponds to the model where $\varepsilon, \lambda_1$ and $\lambda_2$ are tuned, with $\varepsilon > 0$, and UCOOT ($\varepsilon = 0$) means that $\varepsilon = 0$ and only $\lambda_1, \lambda_2$ are tuned.}
	\label{tab:hda2}
\end{table}

\paragraph{Sensitivity analysis}

We report the sensitivity of UCOOT's performance to the hyper-parameters $\varepsilon, \lambda_1$ and $\lambda_2$ for two tasks C$\to$W and A$\to$A in Tables \ref{tab:sensitiv_eps}, \ref{tab:sensitiv_lambda1} and \ref{tab:sensitiv_lambda_2}, respectively. In general, the performance depends significantly on the choice of hyperparameters. In Table \ref{tab:sensitiv_eps}, given fixed values of $\lambda_1$ and $\lambda_2$, UCOOT performs badly for either too small or large values of $\varepsilon$, indicating that regularization is necessary but should not be too strong. From Table \ref{tab:sensitiv_lambda1}, we see that  large value of $\lambda_1$ degrades the performance, meaning that the marginal constraints on the source distributions should not be too tight. Meanwhile, it seems that large $\lambda_2$ is preferable, so the marginal distributions on the target spaces should not be too relaxed. \\

\begin{table}[H]
	\begin{center}
	\small
		\begin{tabular}{c c c c c c c c c}
			\toprule
			& \multicolumn{7}{c}{CaffeNet $\to$ GoogleNet} \\
			\midrule
			Domains & $\varepsilon= 0.03$ & 0.05 & 0.07 & 0.1 & 0.2 & 0.3 & 0.4 \\
			\midrule
			C $\to$ W & 27.65 ($\pm$ 11.34) & 37.20 ($\pm$ 9.35) & 34.50 ($\pm$ 11.07) & 34.75 ($\pm$ 13.04) & 17.00 ($\pm$ 5.92) & 18.45 ($\pm$ 1.11) & 11.25 ($\pm$ 1.66) \\
			\hline
			A $\to$ A & 21.95 ($\pm$ 9.46) & 35.30 ($\pm$ 15.11) & 35.65 ($\pm$ 15.05) & 41.15 ($\pm$ 19.16) & 58.45 ($\pm$ 15.54) & 22.30 ($\pm$ 3.74) & 8.90 ($\pm$ 1.34) \\
			\hline
		\end{tabular}
	\end{center}
	\caption{Sensitivity of UCOOT to $\varepsilon$ in tasks C$\to$W and A$\to$A. We fix $\lambda_2 = 50$ and $\lambda_1 = 1$ and show the accuracy for various value of $\varepsilon$.} \label{tab:sensitiv_eps}
\end{table}
\begin{table}[H]
    \centering
    \small
	\begin{tabular}{c c c c c c c c c}
		\toprule
		& \multicolumn{7}{c}{CaffeNet $\to$ GoogleNet} \\
		\midrule
		Domains & $\lambda_1= 20$ & 30 & 40 & 50 & 60 & 70 & 80 \\
		\midrule
		C $\to$ W & 35.80 ($\pm$ 9.33) & 34.15 ($\pm$ 12.98) & 37.35 ($\pm$ 13.82) & 27.45 ($\pm$ 8.33) & 32.45 ($\pm$ 11.62) & 30.00 ($\pm$ 8.04) & 30.15 ($\pm$ 12.89) \\
		\hline
		A $\to$ A & 55.20 ($\pm$ 18.44) & 53.35 ($\pm$ 18.74) & 44.15 ($\pm$ 21.54) & 24.30 ($\pm$ 15.58) & 36.10 ($\pm$ 23.97) & 32.35 ($\pm$ 14.88) & 24.80 ($\pm$ 15.08) \\
		\hline
	\end{tabular}
	\caption{Sensitivity of UCOOT to $\lambda_1$ in tasks C$\to$W and A$\to$A. We fix $\lambda_2 = 1$ and $\varepsilon = 0.1$ and show the accuracy for various value of $\lambda_1$.} \label{tab:sensitiv_lambda1}
\end{table}
\begin{table}[H]
    \centering
    \small
	\begin{tabular}{c c c c c c c c c}
		\toprule
		& \multicolumn{7}{c}{CaffeNet $\to$ GoogleNet} \\
		\midrule
		Domains & $\lambda_2= 0.3$ & 0.5 & 0.7 & 1 & 2 & 3 & 4 \\
		\midrule
		C $\to$ W & 34.20 ($\pm$ 9.83) & 34.45 ($\pm$ 10.80) & 34.20 ($\pm$ 10.50) & 34.75 ($\pm$ 13.04) & 29.70 ($\pm$ 10.55) & 37.70 ($\pm$ 17.96) & 32.30 ($\pm$ 18.81) \\
		\hline
		A $\to$ A & 20.75 ($\pm$ 10.11) & 29.00 ($\pm$ 15.79) & 29.25 ($\pm$ 20.66) & 41.15 ($\pm$ 19.16) & 32.65 ($\pm$ 8.80) & 42.10 ($\pm$ 20.71) & 49.95 ($\pm$ 15.75) \\
		\hline
	\end{tabular}
	\caption{Sensitivity of UCOOT to $\lambda_2$ in tasks C$\to$W and A$\to$A. We fix $\lambda_1 = 50$ and $\varepsilon = 0.1$ and show the accuracy for various value of $\lambda_2$.} \label{tab:sensitiv_lambda_2}
\end{table}

\paragraph{Additional results} 
We also perform the adaptation from GoogleNet to CaffeNet. The results can be found in the tables \ref{tab:table3}. We draw the same conclusions as in the adaptation from CaffeNet to GoogleNet.

\begin{table}[H]
    \centering
    \small
    \begin{tabular}{c c c c c}
      \toprule
      & \multicolumn{3}{c}{GoogleNet $\to$ CaffeNet} \\
      \midrule
      Domains & GW & UGW & COOT & UCOOT \\
      \midrule

      C $\to$ C & 19.45 ($\pm$ 10.88) & 17.50 ($\pm$ 4.88) & 46.20 ($\pm$ 14.94) & \textbf{46.50 ($\pm$ 5.81)} \\
      \hline
      C $\to$ A & 9.35 ($\pm$ 7.73) & 10.50 ($\pm$ 7.06) & 33.25 ($\pm$ 17.56) & \textbf{34.45 ($\pm$ 4.89)} \\
      \hline
      C $\to$ W & 19.15 ($\pm$ 10.59) & 11.95 ($\pm$ 7.49) & 14.95 ($\pm$ 12.44) & \textbf{33.60 ($\pm$ 10.07)} \\
      \hline
      
      A $\to$ C & 7.90 ($\pm$ 4.92) & 11.70 ($\pm$ 5.57) & 28.80 ($\pm$ 12.02) & \textbf{40.55 ($\pm$ 6.50)} \\
      \hline
      A $\to$ A & 19.75 ($\pm$ 9.51) & 18.40 ($\pm$ 11.71) & \textbf{59.30 ($\pm$ 20.77)} & 58.95 ($\pm$ 10.37) \\
      \hline
      A $\to$ W & 14.55 ($\pm$ 14.62) & 10.05 ($\pm$ 4.70) & 9.75 ($\pm$ 7.75) & \textbf{65.20 ($\pm$ 9.80)} \\
      \hline

      W $\to$ C & 14.05 ($\pm$ 5.97) & 21.95 ($\pm$ 4.33) & 13.70 ($\pm$ 7.01) & \textbf{33.45 ($\pm$ 6.67)} \\
      \hline
      W $\to$ A & 22.85 ($\pm$ 11.87) & 20.90 ($\pm$ 5.98) & \textbf{47.70 ($\pm$ 5.53)} & 44.45 ($\pm$ 6.02) \\
      \hline
      W $\to$ W & 24.10 ($\pm$ 15.78) & 27.95 ($\pm$ 8.34) & \textbf{72.55 ($\pm$ 4.82)} & 68.80 ($\pm$ 10.24) \\
      \bottomrule
      Average & 16.79 ($\pm$ 10.21) & 16.77 ($\pm$ 6.67) & 36.24 ($\pm$ 11.43) & \textbf{47.33 ($\pm$ 7.82)} \\    
      \bottomrule
    \end{tabular}
  \caption{Unsupervised HDA from GoogleNet to CaffeNet.}
  \label{tab:table3}  
\end{table}

\subsection{Multi-omic dataset alignment}\label{multiOmics_exp_SI}
\paragraph{Data Preprocessing} \label{CITEseq_exp_appendix}
For the single-cell multi-omics experiments, we use the ``PBMC'' dataset from Stoeckius \textit{et al} \cite{CITEseq}, accessed on Gene Expression Omnibus (GEO) with the accession code: \url{https://www.ncbi.nlm.nih.gov/geo/query/acc.cgi?acc=GSE100866}{GSE100866}. This dataset contains a mix of 7,985 mouse and human peripheral blood mononuclear cells (PBMC) and profiles ten antibodies, 17,014 human genes, and 12,915 mouse genes. To pick the human cells, we follow the description in \cite{CITEseq}, and select the cells that have at least 500, and more than 90$\%$ of all unique molecular identifiers (UMIs) mapped to the human genes (rather than the mouse genes). From the resulting $\sim 4500$ cells, we pick the first 1000 to use in our experiments. We use the CLR-normalized antibody count data provided in GEO and apply log normalization to the gene expression data using Seurat package in R to remove biases in sequencing across cells \cite{Seurat}. Prior to alignment, we follow the existing single-cell alignment methods \cite{Demetci22, Demetci22-2,Liu2019, singh2020}, and also apply L2 normalization to both modalities. The top 50 most variable genes (Figure \ref{fig:multiomics}(c)) are selected using the \texttt{FindVariableFeatures()} function from Seurat \cite{Seurat}. 

\paragraph{Hyperparameter tuning} Hyperparameters were tuned using grid search. For both COOT and UCOOT, we considered the following range for the entropic regularization coefficients $\epsilon_f, \epsilon_s \in \{1e-5, 5e-5, 1e-5, 5e-4, ... ,0.1, 0.5\}$. For the mass relaxation coefficients $\lambda_1, \lambda_2 $ in UCOOT, the following range was considered $\lambda_1, \lambda_2 \in \{1e-3, 5e-3, 0.01, 0.05, 0.1, 0.5, 1, 5, 10, 50 ,100\}$. Each combination of hyperparameters were run on three randomly chosen subsets of the dataset that included 30\% of the samples and the hyperparameter combinations that on average yielded the highest feature matches and lowest FOSCTTM were picked for the experiments on the full dataset. Below, we list the hyperparameter combinations used for the final alignment results reported in this paper:
\begin{itemize}
    \item \textbf{Balanced scenario of aligning matching features (Figure ~\ref{fig:multiomics}(a)):} \\ $\lambda_1=1, \lambda_2=0.1, \epsilon_1=1e-4, \epsilon_2=1e-4$
    \item \textbf{Unbalanced scenario of aligning a subset of the matching features (Figure ~\ref{fig:multiomics}(b)):} \\ $\lambda_1=1, \lambda_2=1e-2, \epsilon_1=1e-4, \epsilon_2=1e-4$
      \item \textbf{Unbalanced scenario of aligning antibodies with the top 50 most variable genes (Figure ~\ref{fig:multiomics}(c)):}  $\lambda_1=10, \lambda_2=5e-5, \epsilon_1=1e-4, \epsilon_2=0.5$
      \item \textbf{Balanced scenario of aligning the same number of cells (Figure ~\ref{fig:multiomicsSamples}(a)):} \\ $\lambda_1=1, \lambda_2=0.1, \epsilon_1=1e-4, \epsilon_2=1e-4$
      \item \textbf{Unbalanced scenario 1 of aligning different number of cells (Figure ~\ref{fig:multiomicsSamples}(b)):} \\ $\lambda_1=0.01, \lambda_2=0.1, \epsilon_1=5e-3, \epsilon_2=1e-4$
      \item \textbf{Unbalanced scenario 2 of aligning different number of cells (Figure ~\ref{fig:multiomicsSamples}(c)):} \\ $\lambda_1=0.01, \lambda_2=0.1, \epsilon_1=5e-3, \epsilon_2=1e-4$
\end{itemize}
\paragraph{Further investigation of the feature alignments} 
In the unbalanced experiment, where we align the most variable genes and the antibodies, we expect a well-performing alignment method to correctly match antibodies with the genes that express them. This would be the strongest biological connection between a protein (i.e. an antibody, in this case) and a gene. However, other biological connections can also exist, such as between an antibody and a gene that regulates the expression of that antibody, a gene that codes for a protein the antibody physically interacts with, or a gene that codes for a protein that is active in the same biological pahtway as the antibody of interest. To investigate whether there are such matches recovered outside of the ten genes we label as ``matching genes'', we refer to two gene regulatory network databases that contain data on human PBMCs, GRNdb \cite{GRNdb} and GRAND \cite{GRAND} (for the first kind of relationship),  two protein--protein interaction databases, BioGRID \cite{BIOGRID}, and STRING \cite{STRING} (for the second kind of relationship), and KEGG \cite{KEGG}, a database of biological pathways (for the last kind of relationship). 

Of the 46 correspondences yielded by COOT, and 16 by UCOOT, outside of the `correspondences with `matching genes'', only a few show up on these databases:
\begin{itemize}
    \item \textbf{CD19 antibody correspondences:} Both BIOGRID \cite{BIOGRID} and STRING \cite{STRING} databases return a physical interaction with CD79A protein (encoded by the \textit{CD79A} gene), which is experimentally validated by affinity capture-Western \cite{Carter97}. The correspondence with \textit{CD79A} is yielded by both UCOOT and COOT. Additionally, according to KEGG \cite{KEGG}, CD19 participates in the B-cell receptor (BCR) signaling pathway along with IGH, which is formed by multiple segments joining together, including IGHD and IGHM \footnote{\url{https://www.genecards.org/cgi-bin/carddisp.pl?gene=IGHD}, and \url{https://www.genecards.org/cgi-bin/carddisp.pl?gene=IGH&keywords=IGH}}. COOT yields correspondences with the genes that code for these.
    \item \textbf{CD57 antibody correspondences:} There is an experimentally validated physical interaction with ITM2C (encoded by the \textit{ITM2C} gene), which shows up on BIOGRID. This interaction has been validated using proximity labeling mass spectrometry \cite{Go2021}. \textit{ITM2C} is among the correspondences yielded by COOT.
    \item \textbf{CD2 antibody correspondences:} According to the BIOGRID database, a physical interaction between CD2 and PTPRC has been proposed via an \textit{in vitro} study \textit{et al} \cite{Schraven90}. \textit{PTPRC} shows up among the correspondences yielded by both UCOOT and COOT for the CD2 antibody.
    \item \textbf{CD4 antibody correspondences:} According to BIOGRID, CD4 has been shown to physically interact with TUBB using affinity capture mass spectrometry by Bernhard \textit{et al} \cite{Bernhard2004}. TUBB is a component of the tubulin protein, which made out of $\beta-$tubulin (TUBB) and $\alpha-$tubulin (TUBA). UCOOT yields a correspondence between CD4 and TUBA1B (gene that codes for a component of the $\alpha-$tubulin) \footnote{\url{https://www.nature.com/scitable/content/microtubules-the-basics-14673338/}}.
\end{itemize}

Outside of these, no other correspondences returned biological relevance based on our database and literature search, which leads us to conclude COOT yields more redundant correspondences that UCOOT. 

\paragraph{Sample alignment experiments} Below in Figure \ref{fig:multiomicsSamples}, we visualize the aligned samples (first two principal components of the two domains together upon barycentric projection) and report the alignment performance as measured by the ``average fraction of samples closer than true match (FOSCTTM)'' and ``label transfer accuracy'' metrics. We borrow these metrics from previously published single-cell multi-omic data alignment methods \cite{Liu2019,Cao19, Demetci22, Pamona, Demetci22-2}. For label transfer accuracy, we follow the previously published methods \cite{Cao19, Pamona, Demetci22, Demetci22-2} and train a $k-$NN classifier (for $k=5$) on the cell-type labels of the measurement domain with the full set of cells, and apply it to predict the cell-type labels of the downsampled domain. We report the prediction accuracy. For the balanced scenario, we train the classifier on the antibody domain to predict the labels in the gene expression domain upon transportation. For the average FOSCTTM metric used in unbalanced scenarios, we use the cells that remain to have a correspondence after subsampling to calculate the FOSCTTM scores. We note that lower average FOSCTTM and higher label transfer accuracy results indicate better alignments.

\begin{figure}[H]
    \centering
    \includegraphics[width=\linewidth]{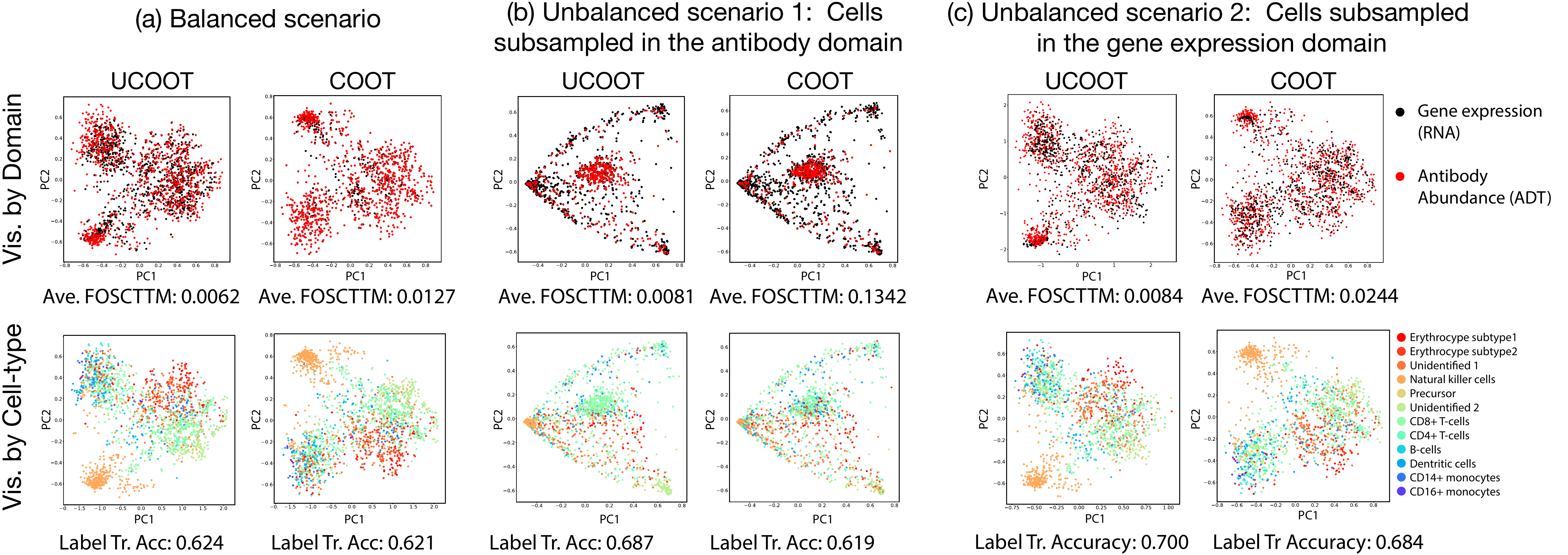}
    \caption{Visualization of the sample alignments with UCOOT and COOT after barycentric projection (First two principal components). The top row visualizes results with samples colored based on measurement modality (black points show the gene expression domain samples, and red points show the antibody domain samples). The bottom row visualizes alignments with samples colored based on cell-type labels. \textbf{(a)} presents sample alignments in the balanced scenario, where we align the same number of cells (1000) in each measurement modality with the matching features (same scenario as Fig 6 (a), but presenting sample alignments). \textbf{(b)} In this unbalanced scenario, we randomly downsample the cells in antibody domain by 25\%. \textbf{(c)} In this second unbalanced scenario, we randomly downsample the cells in the gene expression domain by 25\% and align with the full set of samples in the antibody domain. For all alignments, we quantify alignment quality using average FOSCTTM (``Ave. FOSCTTM'') and label transfer accuracy (``Label Tr. Acc.''), and report them under the plots. We calculate both metrics prior to applying dimensionality reduction with principal component analysis (PCA). PCA is only applied for visualization purposes. Note that the overall increase in label transfer accuracy between \textbf{a-c} is likely due to the removal of groups of heterogenous cell types during downsampling.
\label{fig:multiomicsSamples}
  }
\end{figure}

\end{document}